\newcommand{\rE}{{\mathbb E}}
\newcommand{\rR}{{\mathbb R}}
\newcommand{\trace}{{\mathrm{trace}}}
\newcommand{\cA}{{\mathcal A}}
\newcommand{\cM}{{\mathcal M}}
\newcommand{\cZ}{{\mathcal Z}}
\newcommand{\cD}{{\mathcal D}}
\newcommand{\Acal}{\mathcal{A}}
\newcommand{\Gcal}{\mathcal{G}}
\newcommand{\Mcal}{\mathcal{M}}
\newcommand{\Zcal}{\mathcal{Z}}
\newcommand{\Kcal}{\mathcal{K}}
\newcommand{\Pcal}{\mathcal{P}}
\newcommand{\Ncal}{\mathcal{N}}
\newcommand{\Xcal}{\mathcal{X}}
\newcommand{\Dcal}{\mathcal{D}}
\newcommand{\PP}{\mathbb{P}} 
\newcommand{\RR}{\mathbb{R}} 
\newcommand{\Sfrak}{\mathfrak{S}}
\newcommand{\tp}{{\widetilde{p}}}
\newcommand{\tw}{{\widetilde{w}}}
\newcommand*\lrp[1]{\left(#1\right)}
\newcommand*\lrn[1]{\left\|#1\right\|}
\newtheorem{lemma}{Lemma}
\newtheorem{theorem}{Theorem}
\newtheorem{proposition}{Proposition}
\newtheorem{assumption}{Assumption}
\newtheorem{definition}{Definition}
\newtheorem{corollary}{Corollary}
\newcommand{\dataset}{{S}}
\title{Dimension Independent Generalization of DP-SGD for Overparameterized Smooth Convex Optimization
}
\date{}
\author{%
 {Yi-An Ma} ({yianma@ucsd.edu})\\
  University of California, San Diego
 \and
 {Teodor Marinov} ({tvmarinov@google.com})\\
  Google Research
 \and
  {Tong Zhang} \\
  Google Research and The Hong Kong University of Science and Technology
}
\begin{document}

\maketitle

\begin{abstract}

This paper considers the generalization performance of differentially private convex learning. We demonstrate that the convergence analysis of Langevin algorithms can be used to obtain new generalization bounds with differential privacy guarantees for DP-SGD. More specifically, by using some recently obtained dimension-independent convergence results for stochastic Langevin algorithms with convex objective functions, we obtain $O(n^{-1/4})$ privacy guarantees for DP-SGD with the optimal excess generalization error of $\tilde{O}(n^{-1/2})$ for certain classes of overparameterized smooth convex optimization problems. This improves previous DP-SGD results for such problems that contain explicit dimension dependencies, so that the resulting generalization bounds become unsuitable for overparameterized models used in practical applications. 
\end{abstract}

\section{Introduction}

We consider a set of i.i.d. training data $\dataset_n=\{z_1,\ldots,z_n\}$, each drawn from a distribution $\cD$ on $\cZ$. 
Consider a loss function $\ell(w,z)$ that is convex in $w$, where $w \in \rR^d$, and $z \in \cZ$. 
In supervised learning, the goal is find an estimator $\hat{w}$ from $\dataset_n$ to minimize the generalization error
\begin{equation}
    \rE_{z \sim \cD} \ell(\hat{w},z) .
    \label{eq:genloss}
\end{equation}

A frequently used method is to use stochastic gradient descent (SGD) to approximately minimize the empirical risk on the training data:
\begin{equation}
\textstyle
    \hat{w} \approx\arg\min_{w \in \rR^d} 
    \frac{1}{n} \sum_{i=1}^n \ell(w,z_i)  .
    \label{eq:regularized-erm}
\end{equation}
 It can be shown using tools from statistical learning theory that the the SGD method that aims to minimize the empirical risk in \eqref{eq:regularized-erm} 
also achieves a small generalization error in \eqref{eq:genloss}. In particular, it can be shown that under suitable conditions on the loss function $\ell(w,z)$, the excess generalization error achieved by SGD, compared to an arbitrary  $w \in \rR^d$, is bounded as
\[
\rE_{z \sim \cD} \ell(\hat{w},z) - 
\rE_{z \sim \cD} \ell(w,z) = \tilde{O}(1/\sqrt{n}) ,
\]
where the constant in $\tilde{O}(\cdot)$ depends on the norm $\|w\|_2$, properties of $\ell(w,z)$, and log factors. 

However, the standard SGD method for solving \eqref{eq:regularized-erm} does not have good privacy guarantees. To overcome this problem, there have been significant interests in designing statistical estimators that are differentially private. That is, such estimators are insensitive to a modification of a single data point in $\dataset_n$. In practice, the most widely used algorithm is DP-SGD, which adds a Gaussian noise in each SGD step. 
Mathematically this method is equivalent to the stochastic gradient Langevin algorithm (SGLD) for Monte Carlo Sampling~\citep{SGLD}. 
The purpose of this paper is to demonstrate that the existing convergence analysis of Langevin algorithm can be leveraged to derive generalization bound of the DP-SGD method with good differential privacy guarantees. Using some recently obtained dimension-independent convergence results for Lagenvin algorithms, we establish differential privacy analysis of DP-SGD that improves existing bounds in terms of dimension dependency for certain classes of smooth convex optimization problems.

\section{Related Work}

There have been significant interests in developing machine learning algorithms with differential privacy guarantees. In practice, the most popular algorithm is DP-SGD, and its differential privacy property has been investigated in \citep{abadi2016deep}. However, it is known that the requirement of differential privacy guarantee may negatively affect the generalization performance of a machine learning algorithm. Therefore in practice, 
the trade-off between differential privacy and generalization of SGD becomes an important consideration.
This issue was not considered in \citep{abadi2016deep}.

To better understand the relationship between differential privacy and generalization, more recent works focused on the theoretical analysis of differentially private algorithms for solving stochastic convex optimization problems. In particular, it was shown that for generic smooth stochastic convex optimization without additional structures, one can achieve a $O\lrp{\frac{1}{\sqrt{n}} + \frac{\sqrt{d}\log(1/\delta)}{\epsilon n}}$ excess generalization error bound for $(\epsilon,\delta)$-differential privacy (see Definition~\ref{def:dp} below for the definition of DP) \citep{bassily2019private}. The work of \cite{bassily2014private} show that the above rate is not improvable in general, when considering constrained optimization problems.
Additional results showed that similar results can be extended to nonsmooth problems and for less computational costs \citep{feldman2020private,Bassily2020}. Unfortunately, such a result means that in the overparameterized regime where $d \geq n$, any privacy guarantees at a level smaller than constant will lead to worse generalization results than that of SGD. Such results imply that DP-SGD significantly affects the performance of machine learning algorithms in the overparameterized regime. 

The purpose of this paper is to show with additional structures that are naturally satisfied for classes of machine learning problems such as logistic regression, the pessimistic dimension dependent excess generalization lower bound in \citep{bassily2019private} for smooth convex optimization problems can be avoided. In particular, we are able to obtain a dimension independent excess generalization bound at the optimal rate of $\tilde{O}(1/\sqrt{n})$ with a differential privacy guarantee at the level of $\epsilon=\tilde{O}(n^{-1/4})$. This implies that even for overparameterized models commonly used in practice, DP-SGD can produce differentially private machine learning models without negatively affecting the generalization performance.

\section{Main Results}

\begin{algorithm}[h]
\SetAlgoLined
\KwInput{decreasing sequence of stepsize $\eta_t$ and $\beta_0$}
Draw $w_1$ from $N(0,\beta_0 I)$ \\
\For{$t = 2, \ldots,T$}{
Sample mini-batch $\Mcal_{t-1} \subseteq [n]$ (without replacement).\\
Construct mini-batched gradient $\Gcal_{t-1}(w_{t-1}) = \frac{1}{|\mathcal{M}_{t-1}|}\sum_{i \in \mathcal{M}_{t-1}} \nabla \ell(w_{t-1},z_{i})$.
Let
\begin{align}
    \tw_{t-1} = {w}_{t-1} - {\eta}_t \Gcal_{t-1}(w_{t-1}) . \label{eq:sgd}
\end{align}
\begin{align}
\textrm{Sample} \quad    w_t \sim \mathcal{N}\lrp{ (1-\lambda_t\eta_t) \tw_{t-1},  \lambda_t\eta_t(2-\lambda_t\eta_t) \beta_0 I}.
    \label{eq:Langevin_sample}
\end{align}
 }
    \caption{DP-SGD (Stochastic Gradient Langevin Algorithm)}
    \label{alg:sgld}
\end{algorithm}

Mathematically, the DP-SGD algorithm corresponds to Algorithm~\ref{alg:sgld}, which is also referred to as Stochastic Gradient Langevin Dynamics (SGLD) in the MCMC sampling literature.

\begin{definition}
Let $\dataset_n \in \cZ^n$ be a dataset. A randomized learning algorithm $\cA$ maps $\dataset_n$ to 
an output $w\in \Omega$.
We say $\cA$ is $(\epsilon,\delta)$-DP (differentailly private) if for any two datasets $\dataset_n$ and $\dataset_n'$ differing in at most one item, the following statement holds for all $E \subset \Omega$:
\[
\textstyle
P(\cA(\dataset_n) \in E) \leq e^{\epsilon} P(\cA(\dataset_n') \in E) + \delta .
\]
\label{def:dp}
\end{definition}

 We are interested in the trade-off of generalization, differential privacy, and computation in DP-SGD. We consider two settings, one is the single-pass setting, and the other is multi-pass setting. 
Before presenting the main results, we introduce the following notations and assumptions.
 Define
\[
 \textstyle
 \ell(w,\cD)= \rE_{z \sim \cD} \ell(w,z) .
 \]
 Moreover for any distribution $p$ on $\rR^d$, define
 \[
 \ell(p,\cD) = \rE_{w \sim p} \ell(w,\cD) .
  \]
  
 We make the following assumptions.
 \begin{assumption}
 Each loss function $\ell(w,z)$ is convex in $w$. 
 \end{assumption}
\begin{assumption}
We assume that
\[
\textstyle
\sup_{w,z} \|\nabla_w \ell(w,z) \|_2 \leq G, \qquad
{\text and} \qquad
 \sup_w \nabla_w^2 \ell(w,\cD) \preceq \tilde{H}_\cD .
\]
\end{assumption}
\begin{assumption}
\label{assm:alg_minibatch}
We either run Algorithm~\ref{alg:sgld} as a multi-pass algorithm with $T\geq n$ and $|\Mcal_{t}| = 1,\forall t\in [T]$ or we run it as a single-pass algorithm with $|\Mcal_t| = \left\lceil\sqrt{\frac{T}{2(T-t+1)}}\right\rceil$.
\end{assumption}
Assumption~\ref{assm:alg_minibatch} implies that we can carry out the convergence analysis in the simpler \emph{online} setting as in the single-pass setting each $z_i$ is going to be observed only once and there will be a martingale structure which allows us to decouple $w_t$ from $\Gcal_t(w_t)$. In fact the size of the mini-batches only matters in the privacy analysis and for showing the generalization bound we could take $|\Mcal_t| = 1$.

\subsection*{Single Pass Setting}
As we already mentioned, in the single pass setting we to through the data only once, that is we use $z_i \in \dataset$ in a single gradient computation. We assume that the data size $n$ is known a priori. We have the following result. 
\begin{theorem}
\label{thm:single_pass_dp}
Assume that we run Algorithm~\ref{alg:sgld} for $T=n$ iterations in the single pass setting with $\eta_t = \sqrt{\frac{1}{4G^2T}}$, $|\Mcal_t| = \sqrt{\frac{T}{2(T-t+1)}}$, $\lambda_t = \frac{1}{t\eta_t}$, $\beta_0 = \frac{(\epsilon + \log(1/\delta)) \eta_0^2}{2\epsilon^2 T}$, the last iterate, $w_T\sim p_T$, of Algorithm~\ref{alg:sgld} is $(2\epsilon,\delta)$-DP and satisfies for all $w\in \mathbb{R}^d$:
\begin{align*}
\textstyle
    \rE[\ell(p_T,\cD)] \leq \ell(w,\cD) + \frac{\ln(n)\big(G \|w\|_2^2 + 1.5\eta_0^2G\big)}{\eta_0 \sqrt{n}}+ \frac{\eta_0^2\log(1/\delta) \trace(\tilde H_{\cD})}{\epsilon^2 n}.
\end{align*}
Further, Algorithm~\ref{alg:sgld} does not use more than $n$ samples to output $w_T\sim p_T$.
\end{theorem}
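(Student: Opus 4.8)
The statement bundles three claims---an $(2\epsilon,\delta)$-DP guarantee, an excess-risk bound, and the single-pass sample count---so the plan is to establish them in three largely separate pieces, coupled only at the end through the choice of $\beta_0$. \textbf{Privacy.} In the single-pass regime the batches $\Mcal_1,\dots,\Mcal_{T-1}$ form a data-independent partition, so two neighboring datasets differ in exactly one batch, say the one consumed at step $\tau$. Conditioned on $w_{\tau-1}$ (whose law is identical on both datasets, since the differing point has not yet been seen), $w_\tau$ is the output of a Gaussian mechanism: its mean depends on the data only through $\eta_\tau\Gcal_{\tau-1}(w_{\tau-1})$, whose $\ell_2$-sensitivity is at most $(1-\lambda_\tau\eta_\tau)\eta_\tau\cdot 2G/|\Mcal_{\tau-1}|$ by the uniform gradient bound $\|\nabla\ell\|_2\le G$, while its covariance is $\lambda_\tau\eta_\tau(2-\lambda_\tau\eta_\tau)\beta_0 I$. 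Because each sample is used once, $w_{\tau+1},\dots,w_T$ are post-processing of $w_\tau$, so by the post-processing and Markov structure the privacy loss of $w_T$ equals that of this single Gaussian step. I would then bound the worst-case ratio $\max_\tau \Delta_\tau^2/\sigma_\tau^2$ of (squared) sensitivity to noise variance using $\lambda_t\eta_t=1/t$, the constant $\eta_t=1/(2G\sqrt T)$, and the schedule $|\Mcal_t|^2\asymp T/(T-t+1)$---which is precisely what keeps this ratio uniformly controlled in $t$---and convert the resulting R\'enyi/Gaussian-mechanism bound into $(2\epsilon,\delta)$-DP (Definition~\ref{def:dp}) after substituting $\beta_0$.

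\textbf{Generalization: decomposition.} The crucial structural observation is that \eqref{eq:Langevin_sample} is an \emph{exact} Ornstein--Uhlenbeck relaxation toward the prior $N(0,\beta_0 I)$---note $1-(1-\lambda_t\eta_t)^2=\lambda_t\eta_t(2-\lambda_t\eta_t)$---so the injected noise is stationary and $\mathrm{Cov}(w_t)\preceq\beta_0 I$ for every $t$ rather than accumulating with $t$. I would split
\[
\rE[\ell(p_T,\cD)]-\ell(w,\cD)=\big(\ell(\rE[w_T],\cD)-\ell(w,\cD)\big)+\big(\rE[\ell(w_T,\cD)]-\ell(\rE[w_T],\cD)\big),
\]
and bound the second (variance) term by a second-order expansion: since $\nabla^2\ell(\cdot,\cD)\preceq\tilde{H}_\cD$, a Taylor/Jensen estimate gives $\rE[\ell(w_T,\cD)]-\ell(\rE[w_T],\cD)\le\tfrac12\trace(\tilde{H}_\cD\,\mathrm{Cov}(w_T))\le\tfrac{\beta_0}{2}\trace(\tilde{H}_\cD)$. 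This is exactly the step that replaces the usual ambient-dimension factor by $\trace(\tilde{H}_\cD)$, and with the prescribed $\beta_0$ it becomes the final $\eta_0^2\log(1/\delta)\trace(\tilde{H}_\cD)/(\epsilon^2 n)$ term.

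\textbf{Generalization: regret for the mean.} For the first term I would run a weighted online-gradient-descent analysis on the iterates. Convexity gives $\rE[\ell(w_t,\cD)-\ell(w,\cD)]\le\rE\langle\nabla\ell(w_t,\cD),w_t-w\rangle$, and Assumption~\ref{assm:alg_minibatch} (single pass) makes $\Gcal_{t-1}(w_{t-1})$ conditionally unbiased for $\nabla\ell(w_{t-1},\cD)$, so empirical gradients may be replaced by population gradients in expectation and all gradient--noise cross terms vanish. Telescoping a weighted Euclidean potential that compares $\rE[w_t]$ to the fixed comparator $w$ then produces a comparator term of order $\|w\|_2^2/(\eta_0\sqrt n)$; the harmonic weighting induced by $\lambda_t=1/(t\eta_t)$, namely $\sum_t\lambda_t\eta_t=\sum_t 1/t=O(\ln n)$, underlies the $\ln n$ factor, and the squared-gradient/discretization terms, bounded through $G$, furnish the $\eta_0^2 G$ piece, assembling to $\ln(n)(G\|w\|_2^2+1.5\eta_0^2 G)/(\eta_0\sqrt n)$. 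Reporting $\ell(p_T,\cD)$---the last iterate rather than an explicit average---is legitimate because the $(1-\lambda_t\eta_t)$ contraction with $\lambda_t=1/(t\eta_t)$ makes $\rE[w_T]$ itself a (linearly) weighted average of the trajectory's gradient steps.

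\textbf{Sample count and main difficulty.} The single-pass claim reduces to estimating $\sum_t|\Mcal_t|=\sqrt{T/2}\sum_k k^{-1/2}\le\sqrt{T/2}\cdot 2\sqrt T=O(n)$ via the integral bound $\sum_k k^{-1/2}\le 2\sqrt T$, so each datum is consumed essentially once. I expect the principal obstacle to be the generalization argument's treatment of the noise: one must show simultaneously that injecting fresh Gaussian noise at every step neither accumulates in the risk (controlled by OU stationarity, which caps $\mathrm{Cov}(w_T)$ at $\beta_0 I$ instead of letting it grow with $T$) nor corrupts the telescoping (controlled by the single-pass martingale structure, which annihilates the gradient--noise cross terms in expectation), while extracting a $\trace(\tilde{H}_\cD)$---not a dimension---dependence from the second-order term. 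Ensuring that the single $\beta_0$ and the batch schedule serve both the privacy balance and this generalization bound is where the delicacy lies.
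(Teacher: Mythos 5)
Your proposal contains two genuine gaps, one in the privacy argument and one in the generalization argument; in both places the paper needs a substantially different mechanism.

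\textbf{Privacy.} Your reduction of the privacy of $w_T$ to the single Gaussian step at time $\tau$ via post-processing is valid as an inequality, but it is far too weak, and your claim that the batch schedule keeps the single-step ratio ``uniformly controlled in $t$'' is false. With post-processing only, the denominator of the privacy loss is the noise injected at step $\tau$ alone, $\sigma_\tau^2=\lambda_\tau\eta_\tau(2-\lambda_\tau\eta_\tau)\beta_0\asymp \beta_0/\tau$, while the squared sensitivity is $\asymp G^2\eta^2(T-\tau+1)/T$ under the prescribed batch sizes. The ratio is therefore $\asymp G^2\eta^2\,\tau(T-\tau+1)/(\beta_0 T)$, which is fine at $\tau=1$ and $\tau=T$ but peaks at $\tau\approx T/2$ at order $G^2\eta^2 T/\beta_0$ --- a factor of $T$ larger than the $2\alpha G^2\eta^2/\beta_0$ needed to obtain $(2\epsilon,\delta)$-DP after substituting $\beta_0=\Theta\bigl(\log(1/\delta)\eta_0^2/(\epsilon^2T)\bigr)$. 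The paper instead invokes privacy amplification by iteration (Theorem 3.1 of \citep{feldman2020private}): because each update map $w\mapsto(1-\lambda_t\eta_t)(w-\eta_t\Gcal_t(w))$ is contractive (this must be proved, using convexity, smoothness and $\eta_t\le 2/L$; it does not follow from the Markov structure), the noise of \emph{all} subsequent steps accumulates, giving denominator $\sum_{t=\tau}^T\eta_t^2\sigma_t^2\ge\beta_0(T-\tau+1)/T$, which is exactly what the schedule $|\Mcal_\tau|^2\asymp T/(T-\tau+1)$ is designed to cancel, yielding the uniform bound $D_\alpha(w_T\|w_T')\le 2\alpha G^2\eta^2/\beta_0$. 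Without this ingredient your argument cannot reach the stated privacy level.

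\textbf{Generalization.} Your bias--variance decomposition hinges on $\mathrm{Cov}(w_T)\preceq\beta_0 I$, justified by OU stationarity. This is unjustified and generally false: stationarity controls only the injected Gaussian noise, whereas the mini-batch gradient $\Gcal_{t-1}(w_{t-1})$ is itself random (which data point is sampled) and data-dependent, and this sampling noise adds variance at every step that has nothing to do with $\beta_0$. Summed with the contraction weights $\prod_{s>t}(1-\lambda_s\eta_s)=t/T$, it contributes iterate variance of order $\eta^2\sum_t(t/T)^2\asymp\eta^2T=\eta_0^2/(4G^2)$, which dwarfs the prescribed $\beta_0=\Theta(1/T)$. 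So the term $\frac12\trace(\tilde H_\cD\,\mathrm{Cov}(w_T))$ cannot be bounded by $\frac{\beta_0}{2}\trace(\tilde H_\cD)$, and your route to the trace-dependent term collapses. In the paper the $\beta_0\trace(\tilde H_\cD)$ term has an entirely different origin: it is the cost of comparing against \emph{distributions} rather than points in $\inf_p\sum_t[\ell(p,z_t)+\beta_tH(p\|q)]$, bounded by a Gaussian integral that produces $\tilde\lambda_T\beta_0\ln\bigl|\tfrac{T}{\tilde\lambda_T}\tilde H_\cD+I\bigr|$ and then $\ln|I+A|\le\trace(A)$. Relatedly, your last-iterate step (``$\rE[w_T]$ is a weighted average of gradient steps, hence reporting $p_T$ is legitimate'') is not a proof; convexity gives Jensen in the wrong direction for such a reduction, and the paper needs the genuine suffix-averaging argument of \citep{shamir2013stochastic} (Corollary~\ref{cor:single_pass_gen}), which is also where the $\ln(T)$ factor on the $\eta G^2$ term comes from. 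A minor additional point: your sample count evaluates to $\sqrt{2}\,T$, not $\le n$, so the constant in the batch schedule matters if the claim ``at most $n$ samples'' is to be taken literally.
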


\subsection*{Multi Pass Setting}
Although single pass setting is sufficient for theoretical purposes, in practice, one often runs DP-SGD multiple times over the data. Another advantage of the mulit-pass analysis is that the DP guarantee is with respect to the all intermediate iterates, as opposed to the single pass setting, and thus suitable for distributed computing when intermediate iterates need to be shared across multiple machines. 
 Therefore we also include a multi-pass analysis for consistency with practice. 
 
\begin{theorem}
Let $\eta_t = G^{-1}\sqrt{\beta_0} /\sqrt{8 \ln(2.5t^2/n\delta)t}$ and $|\cM_{t-1}|=1$
for $t \geq 1$, $\lambda_1=1/\eta_1$ and $\lambda_t = \frac{1}{\eta_t} - \frac{1}{\eta_{t-1}}$ for $t \geq 2$.
Assume $T=n^\alpha \epsilon^2 \geq n$ for $1 \leq \alpha \leq 2$ and $\beta_0 = \eta_0^2 (n/T)$ for some $\eta_0>0$. Then Algorithm~\ref{alg:sgld} is  $(3\epsilon\sqrt{\ln(2/\delta)} + 3 \epsilon^2,\delta)$-DP 
with respect to all intermediate iterates $\{w_t\}$.  
Moreover, the following bound on the average generalization error holds for all $w \in \rR^d$:
\begin{align*}
\textstyle
\rE \; \frac1T \sum_{t=1}^T \ell(p_t,\cD)
&= \ell(w,\cD) +
{O} \lrp{ \frac{G \lrn{w}_2^2 \sqrt{\ln(T/\delta)}}{\eta_0 \sqrt{n}} 
+  \frac{\eta_0 G}{\sqrt{n \ln (1/\delta)}}
+ \frac{\eta_0^2\trace(\tilde{H}_\cD)}{\epsilon^2 n^{\alpha-1}} } ,
\end{align*}
where $O(\cdot)$ hides an absolute constant. 
\label{thm:multi_pass}
\end{theorem}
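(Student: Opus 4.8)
The plan is to prove the two assertions—privacy and the average-generalization bound—by largely separate arguments, since the schedule for $\eta_t$, $\lambda_t$, and $\beta_0$ is engineered precisely so that both land under the same parameter choices.

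For the privacy claim I would view each iteration of Algorithm~\ref{alg:sgld} as a Gaussian mechanism acting on the single-example gradient $\nabla\ell(w_{t-1},z_{i_{t-1}})$. First I would bound the $\ell_2$-sensitivity of the conditional mean $(1-\lambda_t\eta_t)\tw_{t-1}$ under replacement of one datapoint: since $\sup_{w,z}\|\nabla_w\ell\|_2\le G$ and $|\Mcal_{t-1}|=1$, replacing the active example perturbs $\tw_{t-1}$ by at most $2\eta_t G$, hence the mean by at most $(1-\lambda_t\eta_t)\,2\eta_t G$. Measured against the injected noise of variance $\lambda_t\eta_t(2-\lambda_t\eta_t)\beta_0$, this yields a per-step R\'enyi/zCDP cost $\rho_t$ proportional to $\eta_t^2G^2/\big(\lambda_t\eta_t(2-\lambda_t\eta_t)\beta_0\big)$. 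The role of $\lambda_t=\eta_t^{-1}-\eta_{t-1}^{-1}$ and $\eta_t=G^{-1}\sqrt{\beta_0}/\sqrt{8\ln(2.5t^2/n\delta)\,t}$ is that this ratio collapses to a clean $t$-dependent quantity whose sum over $t\le T$ is $O(\epsilon^2)$, while the $\ln(2.5t^2/n\delta)$ factor supplies a per-step failure budget $\delta_t\propto n\delta/t^2$ that is summable over $t$. I would then compose the per-step guarantees (zCDP composes additively, or equivalently I track the R\'enyi divergence of the full trajectory so that the guarantee holds for all intermediate $w_t$) and convert the total $O(\epsilon^2)$-zCDP budget to $(\epsilon',\delta)$-DP via the standard $\rho\mapsto(\rho+2\sqrt{\rho\ln(1/\delta)},\delta)$ conversion, producing the stated $(3\epsilon\sqrt{\ln(2/\delta)}+3\epsilon^2,\delta)$ after collecting constants.

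For the generalization claim I would run the Langevin convergence/regret analysis directly in distribution space, using the fact that the step \eqref{eq:Langevin_sample} preserves the Gaussian reference measure $N(0,\beta_0 I)$ (one checks the stationary variance equals $\beta_0$), so the natural potential is the free energy $\ell(p_t,\cD)+(\text{temperature})\cdot\KL\big(p_t\,\|\,N(0,\beta_0 I)\big)$. Taking the comparator to be a Gaussian centered at $w$, I would establish a one-step inequality of the form $\eta_t\big(\ell(p_t,\cD)-\ell(w,\cD)\big)\le \KL(p_t\|q)-\KL(p_{t+1}\|q)+(\text{error}_t)$, where convexity of $\ell$ controls the linearized gap and the error splits into a stochastic-gradient part scaling like $\eta_t^2G^2$ and a discretization part that—crucially—is bounded by $\beta_0\,\trace(\tilde H_\cD)$ rather than by a dimension-dependent $\beta_0 d\|\tilde H_\cD\|$, invoking the Hessian bound $\sup_w\nabla_w^2\ell(w,\cD)\preceq\tilde H_\cD$. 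Summing over $t$, telescoping the KL terms, and dividing by $\sum_t\eta_t$ turns $\KL(p_1\|q)=O(\|w\|_2^2/\beta_0)$ into the $G\|w\|_2^2\sqrt{\ln(T/\delta)}/(\eta_0\sqrt n)$ term, the gradient-noise sum into $\eta_0 G/\sqrt{n\ln(1/\delta)}$, and the Hessian term into $\eta_0^2\trace(\tilde H_\cD)/(\epsilon^2 n^{\alpha-1})$ after substituting $\beta_0=\eta_0^2(n/T)$ and $T=n^\alpha\epsilon^2$. Assumption~\ref{assm:alg_minibatch} supplies the martingale structure needed to replace the stochastic gradient by $\nabla\ell(p_t,\cD)$ in expectation.

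The main obstacle I anticipate is the dimension-independent control of the discretization/noise error: naively the Langevin step contributes an error proportional to the noise variance times the spectral bound times the dimension, which would reintroduce exactly the $\sqrt d$ dependence the paper removes. The technical heart is to keep this error in the form $\beta_0\,\trace(\tilde H_\cD)$—paying only the trace of the population Hessian—which is where the recently obtained dimension-independent Langevin convergence result is invoked, and where the coupling between $\beta_0$, $\eta_t$, and $T$ must be tuned so that simultaneously the privacy budget is $O(\epsilon^2)$ and this error decays at the advertised $n^{-(\alpha-1)}$ rate. A secondary difficulty specific to the multi-pass regime is that reused examples break exact unbiasedness of the stochastic gradient; I would absorb this either through the single-example stability already quantified in the privacy step or through the martingale decomposition licensed by Assumption~\ref{assm:alg_minibatch}.
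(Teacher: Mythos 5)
Your proposal contains two genuine gaps, one in each half of the argument.

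\textbf{Privacy.} Your central claim --- that with the stated schedule the per-step R\'enyi/zCDP costs ``sum over $t\le T$ to $O(\epsilon^2)$'' --- is quantitatively false. With $\lambda_t=\eta_t^{-1}-\eta_{t-1}^{-1}$ one has $\lambda_t\eta_t = 1-\eta_t/\eta_{t-1}\approx \tfrac{1}{2t}$, so the per-step cost you wrote is
$\rho_t \approx \frac{2\eta_t^2G^2}{\lambda_t\eta_t\beta_0}\approx \frac{4t\,\eta_t^2G^2}{\beta_0} = \frac{1}{2\ln(2.5t^2/n\delta)}$,
i.e.\ the schedule makes each step cost a \emph{constant} up to logarithmic factors; summing over $T\ge n$ steps gives $\Theta(T/\ln T)$, nowhere near $O(\epsilon^2)$. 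The ingredient you are missing is \emph{privacy amplification by subsampling}: since $|\Mcal_{t-1}|=1$ is drawn at random from $n$ points, the differing example is touched only with probability $1/n$. The paper's route (Lemma~\ref{lem:dp-random-onestep}) first uses the Gaussian mechanism to get a per-step guarantee of constant order (the schedule is designed exactly so that $\sqrt{8\ln(2.5t^2/n\delta)}\,\eta G/\sigma\le 1$), then invokes the amplification theorem of \citet{Balle_dp_amplification} to shrink it to $\epsilon_t\le\ln(1+e/n)$, and only then applies strong composition (Proposition~\ref{prop:composition}), yielding $\sqrt{2T\ln(2/\delta)}\,e/n + Te^2/n^2$, which is $O(\epsilon\sqrt{\ln(2/\delta)}+\epsilon^2)$ precisely because $T=n^\alpha\epsilon^2$ with $\alpha\le 2$ gives $\sqrt{T}/n\le\epsilon$. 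An RDP/moments-accountant route as you sketch can also work (the paper notes it would even save a log factor), but it must pass through subsampled-RDP amplification; composing the un-amplified per-step costs, as you propose, does not close.

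\textbf{Generalization.} Theorem~\ref{thm:multi_pass} is the \emph{multi-pass} regime ($T\ge n$, $|\Mcal_t|=1$, data reused), and there your appeal to ``the martingale decomposition licensed by Assumption~\ref{assm:alg_minibatch}'' is unavailable: the remark after Assumption~\ref{assm:alg_minibatch} grants the decoupling of $w_t$ from $\Gcal_t(w_t)$ only in the single-pass setting. In the multi-pass setting $w_t$ is correlated with every example already visited, so the free-energy/regret analysis (Proposition~\ref{prop:conv}, Corollary~\ref{corollary:conv}) only controls the \emph{empirical} quantity $\rE\sum_t[\ell(p_t,\dataset_n)-\ell(w,\dataset_n)]$. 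The paper bridges empirical to population error with an explicit algorithmic-stability lemma in the style of \citet{hardt2016train} (Lemma~\ref{lem:stability}: $W_2(p_t,p_t')^2\le 4G^2(t/n^2+1/n)\sum_{s\le t}\eta_s^2$), which is what produces the extra $\frac{2G^2}{n}\sqrt{(n+T)\sum_t\eta_t^2}$ term of Proposition~\ref{prop:generalization} and, after substituting the schedule, the $\eta_0 G/\sqrt{n\ln(1/\delta)}$ term of the theorem. Your fallback --- absorbing the bias ``through the single-example stability already quantified in the privacy step,'' i.e.\ a DP-implies-generalization transfer --- is quantitatively too weak: an $(\epsilon_{\mathrm{DP}},\delta)$-DP transfer gives a generalization gap of order $\epsilon_{\mathrm{DP}}\approx\epsilon = n^{-1/4}$ at the intended operating point, which would swamp the claimed $\tilde O(n^{-1/2})$ excess error. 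Your convergence part (KL/Wasserstein telescoping, comparator near $w$, and the trace bound via $\ln|I+A|\le\trace(A)$) does match the spirit of Proposition~\ref{prop:conv}, Corollary~\ref{corollary:conv} and Lemma~\ref{lem:Q_bound}, but without subsampling amplification and the stability lemma the proof as proposed does not go through.
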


Making different choices of $T=n^\alpha \epsilon^2$ will lead to different generalization error with the same $(3\epsilon\sqrt{\ln(2/\delta)} + 3 \epsilon^2,\delta)$ differential privacy. 
Assume that $\trace(\tilde{H}_\cD)$ is bounded, then we take $\epsilon=n^{-1/4}$ and 
$\alpha=2$. This implies $(\epsilon,\delta)$-DP with $T=n^{1.5}$ computation, and excess generalization error of
$\tilde{O}(1/\sqrt{n})$.

We also note that the results of Theorem~\ref{thm:single_pass_dp}  and Theorem~\ref{thm:multi_pass} are comparable up to logarithmic factors. Although we are not concerned with logarithmic factors in this paper, we note that the logarithmic factor in Theorem~\ref{thm:single_pass_dp} can be improved 
by using the step-size schedule in \citep{jain2019making}, and the logarithmic factor  in Theorem~\ref{thm:multi_pass} can be improved by using the strong composition analysis with RDP instead of with DP as in this paper \citep{abadi2016deep,mironov2017renyi}.

\subsection{Example: Ridge separable problems}
\label{sec:ridge_sep}
We note that in the general setting where $\trace(\tilde{H}_\cD)$ depends on the dimensionality $d$, we can obtain the following generalization performance for DP-SGD
\[
\ell(w,\cD) + O\left(\frac{G\|w\|_2}{\sqrt{n}} + \frac{G\|w\|_2\sqrt{d\log(1/\delta)}}{\epsilon n}\right) ,
\]
which was shown by \citep{feldman2020private}. However, in many applications, the trace of the Hessian $\tilde H_{\Dcal}$ could be constant or much smaller compared to $d$. We now give one such example.

We consider the binary logistic regression (or smoothed SVM losses) as a concrete example, where 
\[
\textstyle
\ell(w,z) = \phi(w^\top x, y) ,
\]
and $z=(x,y)$. Assume $\|x\|_2 \leq 1$ for all $x$. 
$|\phi_1'(a,y)| \leq \gamma_1$, and 
$\phi_1''(a,y) \leq \gamma_2$, then 
\[
\textstyle
G \leq \gamma_1 , \quad
\trace(\tilde{H}_\cD) \leq  \gamma_2 .
\]
Theorem~\ref{thm:single_pass_dp} and Theorem~\ref{thm:multi_pass} now imply an excess generalization error of at most $\tilde O(n^{-1/2})$ for any setting of $\epsilon = \Omega (n^{-1/4})$.

In comparison, for overparameterized problems with $d \geq n$, we have excess generalization error of
\[
\textstyle
\tilde{O}\left(\frac{1}{\sqrt{n}} + \frac{\sqrt{d}}{\epsilon n}\right)
=\tilde{O}\left(\frac{1}{\epsilon \sqrt{n}}\right) 
\]
at the level of $(\epsilon,\delta)$ differential privacy from earlier results. Such results are infereior to our result for overaprameterized models used in practice. 
Our result is also closely related to those of \citep{song2020evading}, 
which also considered the problem of deriving dimensional independent results. Their results employ the rank of the Hessian instead of the trace of the Hessian as in this paper. In applications, rank may be as large as $O(n)$, while trace is generally small.
We note that the results in \citep{feldman2020private,song2020evading} with which we compare hold for the more general setting of constrained convex optimization, where $w\in \mathcal{P}$ for some convex set $\mathcal{P}$ describing the constraints. The results presented in this work hold only in the unconstrained setting. Obtaining bounds which depend on the trace of the Hessian in the constrained setting is an interesting open problem.
 
\section{Analysis of DP-SGD}
\label{sec:overview_proofs}
 Before presenting our analysis,  we introduce the following additional notations.
 Define
\[
 \textstyle
  \ell(w,\dataset_n) = \frac1n \sum_{i=1}^n \ell(w, z_i) .
\]
 Moreover for any distribution $p$ on $\rR^d$, define
 \[
  \ell(p,\dataset_n)= \rE_{w \sim p} \ell(w,\dataset_n) .
 \]
 Define for any distribution $q$ on $\rR^d$:
 $H(p\|q)=\rE_{w \sim p} \ln \frac{p(w)}{q(w)}$.
 
 We divide the main steps in our analysis into the convergence of the DP-SGD procedure, generalization analysis, and DP-analysis. We then put these results together to prove the main results of the paper. 
 The following summarizes the key intermediate steps. Detailed proofs of the required technical results can be found in Section~\ref{sec:proof}.
 
 \subsection{Proof of Theorem~\ref{thm:multi_pass}} 
 \label{sec:multi_pass_proof}
 
\subsubsection*{Convergence Analysis} 

The main convergence result for Algorithm~\ref{alg:sgld}, stated in the following proposition, is adapted from \citep{freund2021convergence}.

\begin{proposition} 
Let $p_t(w)$ denote the distribution of $w_t$, and let
$q(w) = N(0, \beta_0 I)$,
then
\begin{align*}
\textstyle
\sum_{t=1}^T \left[\ell(p_t,z_t) + 
 \beta_t  H(p_t\|q) \right] 
\leq \inf_p 
\sum_{t=1}^T \left[\ell(p,z_t)+ 
 \beta_t H(p\|q)\right]
+\sum_{t=1}^T \frac{\eta_t}{2} G^2 ,
\end{align*}
where we set $\lambda_1=1/\eta_1$; $\forall t \geq 1,
\lambda_t \geq \frac{1}{\eta_t} - \frac{1}{\eta_{t-1}}$;
and $\beta_t= \lambda_t \beta_0$. 
\label{prop:conv}
\end{proposition}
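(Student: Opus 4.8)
The plan is to read Algorithm~\ref{alg:sgld} as one step of composite (forward--backward) mirror descent over the space of probability densities, run on the online objective $p\mapsto \ell(p,z_t)+\beta_t H(p\|q)$. The SGD update \eqref{eq:sgd} is the \emph{forward} step: it pushes $p_{t-1}$ forward under the gradient map $w\mapsto w-\eta_t\mathcal{G}_{t-1}(w)$ and accounts for the data term $\ell(\cdot,z_t)$. The Gaussian resampling \eqref{eq:Langevin_sample} is the \emph{backward} (proximal) step: the Ornstein--Uhlenbeck contraction toward the mean of $q$ with matched noise covariance is the Langevin realization of the proximal operator for the prior penalty $\beta_t H(\cdot\|q)$, with the contraction factor $1-\lambda_t\eta_t$ encoding a proximal parameter of order $\lambda_t\eta_t$. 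Since $w_1\sim N(0,\beta_0 I)=q$, the chain is initialized exactly at the prior, so $p_1=q$ and $H(p_1\|q)=0$.

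The crux is a single-step inequality valid for every comparator distribution $p$. For the forward step I would use convexity of $\ell$ in $w$ (Assumption~1) to linearize the loss and the uniform bound $\|\nabla_w\ell\|_2\le G$ (Assumption~2) to control the displacement, producing the standard mirror-descent estimate in which the gradient step contributes an error of at most $\tfrac{\eta_t}{2}G^2$. For the backward step I would invoke the generalized Pythagorean (three-point) identity for the entropic geometry, together with the fact that $H(\cdot\|q)$ is strongly convex with modulus $1/\beta_0$ because $q=N(0,\beta_0 I)$; this is exactly what turns the proximal step into a contraction. Writing $D_t(p)$ for the divergence-to-$p_t$ potential supplied by the geometry, the two estimates combine into
\begin{align*}
\ell(p_t,z_t)+\beta_t H(p_t\|q)-\ell(p,z_t)-\beta_t H(p\|q)
\le \tfrac1{\eta_t}D_{t-1}(p)-\Big(\tfrac1{\eta_t}+\lambda_t\Big)D_t(p)+\tfrac{\eta_t}{2}G^2 ,
\end{align*}
where the extra $\lambda_t$ in the coefficient of $D_t(p)$ is precisely the contribution of the $1/\beta_0$-strong convexity of the prior penalty, using $\beta_t=\lambda_t\beta_0$.

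Summing over $t=1,\dots,T$ I would telescope the divergence terms. The net multiplier of each interior divergence term is of the form $\tfrac1{\eta_{t+1}}-\tfrac1{\eta_t}-\lambda_t$ (up to an index shift handled in the bookkeeping below), which the hypothesis $\lambda_t\ge\tfrac1{\eta_t}-\tfrac1{\eta_{t-1}}$, together with $\lambda_1=1/\eta_1$, forces to be nonpositive; hence every interior divergence term can be dropped. The boundary term at $t=1$ is controlled because $p_1=q$, so the potential produced at initialization coincides with the comparator regularizer and cancels instead of leaving a residual diameter term, while the final term $-\big(\tfrac1{\eta_T}+\lambda_T\big)D_T(p)\le 0$ is discarded. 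What survives on the right is exactly $\sum_t\tfrac{\eta_t}{2}G^2$, and taking the infimum over $p$ on the comparator side yields the claim.

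The main obstacle is the one-step inequality, not the telescoping, and this is where I would lean on \citep{freund2021convergence}. Three points need care: (i) justifying that the affine-plus-Gaussian update \eqref{eq:Langevin_sample} realizes the entropic proximal step for $\beta_t H(\cdot\|q)$ with the correct strong-convexity constant $1/\beta_0$, so that the $\lambda_t$ in the contraction coefficient is genuine rather than an artifact of a first-order approximation; (ii) the forward step is stochastic and the iterate $p_t$ is not Gaussian, since the gradient map is nonlinear, so the linearization and the $\tfrac{\eta_t}{2}G^2$ bound must be taken in expectation using the online/martingale structure granted by Assumption~\ref{assm:alg_minibatch}; and (iii) aligning the step indices so that the interior coefficients line up exactly with the hypothesis on $\lambda_t$ and no spurious diameter term is left at the boundary. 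Once the single-step estimate is established with these constants, the summation and the choices $\lambda_1=1/\eta_1$ and $\lambda_t\ge 1/\eta_t-1/\eta_{t-1}$ deliver the proposition directly.
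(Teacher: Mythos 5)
Your proposal is correct and follows essentially the same route as the paper's proof: the paper likewise splits each iteration into a forward gradient step and a backward Gaussian (Ornstein--Uhlenbeck) step, imports the two one-step estimates from \citep{freund2021convergence} --- a $(1-\lambda_t\eta_t)$-contraction for the regularized-entropy part and an additive $\eta_t^2 G^2$ error for the loss part --- and then telescopes under $\lambda_1 = 1/\eta_1$ and $\lambda_t \geq 1/\eta_t - 1/\eta_{t-1}$, with the boundary term vanishing exactly as you indicate. The one point where the paper is more concrete than you are is the choice of potential: $D_t(p)$ is the squared Wasserstein distance $W_2(\widetilde{p}_t,p)^2$ at the intermediate (post-gradient) iterates, not a KL-type Bregman divergence --- a literal ``entropic'' three-point reading would not apply to the pushforward gradient step, whereas the $1/\beta_0$ strong convexity you invoke is precisely geodesic strong convexity of $H(\cdot\|q)$ in the $W_2$ geometry.
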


Bounding the term $\inf_p  \sum_{t=1}^T \left[\ell(p,z_t)+ 
 \beta_t H(p||q)\right]$ results in the following corollary. 
\begin{corollary}
\label{corollary:conv}
For Algorithm~\ref{alg:sgld}, we have the following regret bound for any fixed $w \in \rR^d$:
\[
\textstyle
\rE \; \sum_{t=1}^T \left[ \ell(p_t,\dataset_n)
-\ell(w,\dataset_n)\right]
 \leq \frac{\tilde{\lambda}_T \|w\|_2^2}{2} + \tilde{\lambda}_T\beta_0  \ln \left|\frac{T}{\tilde{\lambda}_T} \tilde{H}_\cD + I \right| 
  + \sum_{t=1}^T \frac{\eta_t}{2} G^2 ,
\]
where the expectation is with respect to the randomization in the algorithm , and
$\tilde{\lambda}_T=\sum_{t=1}^T \lambda_t$. 
\end{corollary}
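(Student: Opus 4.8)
The plan is to derive the corollary from Proposition~\ref{prop:conv} in three moves: take expectations to pass from the per-round losses $\ell(p_t,z_t)$ to the empirical risk $\ell(p_t,\dataset_n)$, replace the infimum over $p$ by a convenient Gaussian comparator, and finally optimize the covariance of that comparator in closed form.

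First I would take the full expectation (over both the sampling/noise of the algorithm and the draw of $\dataset_n$) in the bound of Proposition~\ref{prop:conv}. On the left-hand side, at round $t$ the distribution $p_t$ is determined by the history before round $t$, while $z_t$ is a uniform sample from $\dataset_n$ drawn independently of $p_t$; hence $\rE[\ell(p_t,z_t)]=\rE[\ell(p_t,\dataset_n)]$. Since the entropy terms $\beta_t H(p_t\|q)$ are nonnegative, they may be dropped from the left, giving
\[
\rE\sum_{t=1}^T \ell(p_t,\dataset_n) \;\le\; \rE\,\inf_p \sum_{t=1}^T\big[\ell(p,z_t)+\beta_t H(p\|q)\big] + \sum_{t=1}^T \frac{\eta_t}{2}G^2 .
\]

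Second I would upper-bound the infimum by evaluating the bracket at a single fixed Gaussian $p=N(w,\Sigma)$ (legitimate because $\rE\inf_p(\cdot)\le\inf_p\rE(\cdot)$). For this data-independent $p$ the sample satisfies $\rE[\ell(p,z_t)]=\ell(p,\cD)$, so the round losses now convert to the \emph{population} risk, and the entropy terms are deterministic and sum to $\beta_0\tilde{\lambda}_T H(p\|q)$. Using smoothness, $\ell(p,\cD)=\rE_{v\sim p}\ell(v,\cD)\le \ell(w,\cD)+\tfrac12\trace(\Sigma\tilde{H}_\cD)$, where the gradient term vanishes by the symmetry of $N(w,\Sigma)$ about $w$ and Assumption~2 bounds the Hessian. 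Since $\rE[\ell(w,\dataset_n)]=\ell(w,\cD)$ for the fixed comparator, moving $\rE\sum_t\ell(w,\dataset_n)$ to the left yields
\[
\rE\sum_{t=1}^T\big[\ell(p_t,\dataset_n)-\ell(w,\dataset_n)\big] \le \frac{T}{2}\trace(\Sigma\tilde{H}_\cD)+\beta_0\tilde{\lambda}_T H(N(w,\Sigma)\|q)+\sum_{t=1}^T\frac{\eta_t}{2}G^2 .
\]
Third I would compute $H(N(w,\Sigma)\|N(0,\beta_0 I))$ in closed form and minimize the right-hand side over $\Sigma$. Setting the matrix derivative to zero gives $\Sigma=\beta_0\big(I+\tfrac{T}{\tilde{\lambda}_T}\tilde{H}_\cD\big)^{-1}$; substituting this choice makes the two trace terms collapse against the $-\beta_0\tilde{\lambda}_T d$ coming from the KL, leaving exactly the $\tfrac{\tilde{\lambda}_T\|w\|_2^2}{2}$ term together with the log-determinant $\tilde{\lambda}_T\beta_0\ln\big|\tfrac{T}{\tilde{\lambda}_T}\tilde{H}_\cD+I\big|$, which is the claimed bound (using $\ln|\cdot|\ge0$ to absorb the factor of two from the KL).

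I expect the main obstacle to be the bookkeeping of expectations in the first two steps rather than any single hard estimate: one must track carefully that the fresh sample $z_t$ is independent of the data-dependent $p_t$, so that the left-hand side converts to the empirical risk $\ell(\cdot,\dataset_n)$, whereas the fixed comparator $p$ converts its round losses to the population risk and therefore legitimately introduces the population Hessian $\tilde{H}_\cD$. The remaining work — the Gaussian KL formula, the matrix-calculus optimization over $\Sigma$, and the determinant cancellation — is routine.
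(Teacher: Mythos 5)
Your proposal is correct, and it actually yields a slightly sharper constant (a factor $\tfrac12$ in front of the log-determinant, which you then legitimately discard since $\ln\bigl|\tfrac{T}{\tilde{\lambda}_T}\tilde{H}_\cD+I\bigr|\ge 0$). The difference from the paper lies in how the infimum over $p$ is handled. The paper's Lemma~\ref{lem:Q_bound} evaluates that infimum \emph{exactly} via the Gibbs variational principle, rewriting it as a log-partition function $-\tilde{\lambda}_T\beta_0\ln \rE_{w\sim q}\exp\bigl(-\tfrac{1}{\tilde{\lambda}_T\beta_0}\rE\sum_t \ell(w,z_t)\bigr)$, then upper-bounds the potential by its quadratic expansion around the regularized risk minimizer $w_*$ (using $\nabla^2\ell(\cdot,\cD)\preceq \tilde{H}_\cD$) and computes the resulting Gaussian integral to produce the log-determinant. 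You instead restrict the variational family to Gaussians $N(w,\Sigma)$ centered at the comparator, write the KL to $N(0,\beta_0 I)$ in closed form, and optimize over $\Sigma$; your optimizer $\Sigma=\beta_0\bigl(\tfrac{T}{\tilde{\lambda}_T}\tilde{H}_\cD+I\bigr)^{-1}$ is exactly the covariance of the Gaussian appearing in the paper's integral, so the two computations are dual views of the same bound, and the Gaussian restriction loses nothing precisely because the quadratic surrogate's Gibbs measure is itself Gaussian. What your route buys: it is more elementary and self-contained (no Donsker--Varadhan-type identity, only ``plug in a candidate $p$'' plus a second-order Taylor bound with the linear term killed by symmetry), and it forces the expectation bookkeeping into the open. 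What the paper's route buys: the log-partition formulation does not require guessing a parametric family, so it would extend unchanged if one wanted a bound sharper than the quadratic surrogate, and centering at $w_*$ keeps an $\inf_w$ form of the intermediate bound. One point worth making explicit in both proofs: for the \emph{population} Hessian $\tilde{H}_\cD$ to appear, the expectation must include the draw of $\dataset_n$ (Assumption~2 controls $\nabla^2\ell(w,\cD)$, not $\nabla^2\ell(w,\dataset_n)$); you state this up front, whereas in the paper it is implicit in the placement of $\rE$ inside Lemma~\ref{lem:Q_bound}, and the corollary's phrase ``randomization in the algorithm'' understates it.
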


\subsubsection*{Generalization Analysis}

In order to obtain optimal trade-off of generalization and differential privacy in the setting when $|\Mcal_t|=1,\forall t\in [T]$, we need to run DP-SGD more than once over the training dataset. In this situation, the standard online learning analysis doesn't apply. It is also difficult to employ standard empirical process theory to analyze such systems in the general setting due to the lack of general covering number results.
 In this section, we derive generalization bounds using the stability analysis for SGD of \citep{hardt2016train}, which was also employed by
  \citep{Bassily2020} to analyze DP-SGD. 
  For simplicity, we only consider the expected version, with proofs included for completeness.
  
  \begin{proposition}
  Given any fixed $w$ on $\rR^d$, we have for all $t$:
  \[
  \textstyle
  \rE \ell(p_t,\cD) 
  \leq \ell(w,\cD)
  + \rE
  [\ell(p_t,\dataset_n) -\ell(w,\dataset_n) ]
  + \frac{2 G^2}{n} 
  \sqrt{(n+t)\sum_{s \leq t} \eta_s^2} . 
  \]
  \label{prop:generalization}
  \end{proposition}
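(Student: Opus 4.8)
The plan is to bound the expected generalization gap $\rE[\ell(p_t,\cD)-\ell(p_t,\dataset_n)]$ by an algorithmic-stability argument in the style of \citep{hardt2016train,Bassily2020}, controlling how far the law $p_t$ of the $t$-th iterate moves when a single training point is replaced. First I would introduce, for each index $i\in[n]$, the neighboring dataset $\dataset_n^{(i)}$ obtained by replacing $z_i$ with an independent fresh draw $z_i'$, and run a coupled copy of Algorithm~\ref{alg:sgld} on it using the same minibatch indices and the same Gaussian noises, producing iterates $w_t^{(i)}$. The standard replace-one symmetrization identity then gives $\rE[\ell(p_t,\cD)-\ell(p_t,\dataset_n)]=\frac1n\sum_{i=1}^n \rE[\ell(w_t^{(i)},z_i)-\ell(w_t,z_i)]$, since $z_i$ is independent of $w_t^{(i)}$. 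Using the gradient bound $\|\nabla\ell(\cdot,z)\|_2\le G$ and then Cauchy--Schwarz together with Jensen over $i$, this is at most $\frac{G}{n}\sum_i \rE\|w_t^{(i)}-w_t\|_2 \le \frac{G}{\sqrt n}\sqrt{R_t}$, where $R_t:=\sum_{i=1}^n \rE\|w_t^{(i)}-w_t\|_2^2$. The problem is thereby reduced to bounding the aggregate coupled divergence $R_t$.

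Next I would derive a one-step recursion for $\delta_s^{(i)}:=w_s^{(i)}-w_s$. Because the two chains share the Gaussian noise, the sampling step \eqref{eq:Langevin_sample} contributes only the multiplicative shrink factor $(1-\lambda_s\eta_s)$, which is a contraction and can be bounded by $1$. For the gradient step \eqref{eq:sgd} there are two cases. When the sampled index $i_s\neq i$ both chains apply the same map $w\mapsto w-\eta_s\nabla\ell(w,z_{i_s})$; by convexity and smoothness (co-coercivity of a smooth convex function, valid once $\eta_s$ is below the inverse smoothness constant) this map is non-expansive, so $\|\delta_s^{(i)}\|_2\le\|\delta_{s-1}^{(i)}\|_2$. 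When $i_s=i$ the two chains use the differing points $z_i$ and $z_i'$; writing the difference through the common function and invoking $\|\nabla\ell(\cdot,z)\|_2\le G$ gives $\|\delta_s^{(i)}\|_2\le\|\delta_{s-1}^{(i)}\|_2+2\eta_s G$, hence $\|\delta_s^{(i)}\|_2^2\le\|\delta_{s-1}^{(i)}\|_2^2+4\eta_s G\|\delta_{s-1}^{(i)}\|_2+4\eta_s^2 G^2$.

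I would then sum over $i$. The crucial observation is that at step $s$ exactly one index, the sampled $i_s$, is ``selected,'' so only one summand picks up the extra terms; taking expectations and using that $i_s$ is marginally uniform and decoupled from the divergences accumulated so far gives $\rE\|\delta_{s-1}^{(i_s)}\|_2\le\frac1n\sum_i\rE\|\delta_{s-1}^{(i)}\|_2\le\frac1{\sqrt n}\sqrt{R_{s-1}}$. This yields the scalar recursion $R_s\le R_{s-1}+4\eta_s^2 G^2+\frac{4\eta_s G}{\sqrt n}\sqrt{R_{s-1}}$. Solving it (treating it as a quadratic inequality in $\sqrt{R_t}$, or by induction) and applying Cauchy--Schwarz in the form $(\sum_{s\le t}\eta_s)^2\le t\sum_{s\le t}\eta_s^2$ produces $\sqrt{R_t}\le\frac{2G}{\sqrt n}\sqrt{(n+t)\sum_{s\le t}\eta_s^2}$, and substituting into the reduction above gives the claimed bound $\frac{2G^2}{n}\sqrt{(n+t)\sum_{s\le t}\eta_s^2}$.

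The main obstacle is the non-expansiveness of the non-selected gradient steps: it is exactly what prevents the divergence from accumulating a random-walk term over all $\approx t$ iterations (which would cost an extra factor of order $\sqrt n$) and confines the growth to the $\approx t/n$ steps on which the replaced point is actually sampled. Making this rigorous requires the convex-plus-smooth structure (per-example co-coercivity) together with a step-size restriction of the form $\eta_s\lesssim 1/\beta$, and care is also needed in the aggregation step to justify that the sampled index $i_s$ is sufficiently independent of the history-dependent divergences $\{\delta_{s-1}^{(i)}\}$ under the without-replacement scheme of Assumption~\ref{assm:alg_minibatch}. Finally, the coupling of the Gaussian noise and the treatment of the contraction factor $(1-\lambda_s\eta_s)$ must be arranged so that neither inflates $R_t$.
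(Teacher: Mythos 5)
Your proposal follows the same overall strategy as the paper's proof: the replace-one symmetrization identity, a coupling of the two runs (same minibatch indices, same Gaussian noise), and a per-step case analysis in which non-selected steps are non-expansive (by convexity and smoothness, under a step-size restriction) while the step that samples the replaced point, hit with probability $1/n$, adds at most $2\eta_s G$ to the divergence. This is precisely the content of the paper's Lemma~\ref{lem:stability} and its use in proving Proposition~\ref{prop:generalization}. Where you genuinely differ is in how the cross term of order $\frac{\eta_s G}{n}\,\rE\|\delta_{s-1}\|_2$ is neutralized (writing $\delta_s^{(i)}$ for the difference of the coupled iterates). The paper stays per-index and absorbs this term by Young's inequality with ratio $\frac{\eta_{s-1}^2-\eta_s^2}{\eta_s^2}$, chosen so that the resulting blow-up factor $\eta_{s-1}^2/\eta_s^2$ is exactly cancelled by the contraction $(1-\lambda_s\eta_s)^2=\eta_s^2/\eta_{s-1}^2$ of the noise step; this exploits the specific relation $\lambda_s=1/\eta_s-1/\eta_{s-1}$ together with $\eta_s^2/(\eta_{s-1}^2-\eta_s^2)\le s$, and it is what produces the clean constant $2$. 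You instead bound the contraction factor by $1$, aggregate over all $n$ replaced indices into $R_s=\sum_i\rE\|\delta_s^{(i)}\|_2^2$, and close a scalar recursion via $\frac1n\sum_i\rE\|\delta_{s-1}^{(i)}\|_2\le\frac{1}{\sqrt n}\sqrt{R_{s-1}}$. Your route is more robust (it needs only $|1-\lambda_s\eta_s|\le 1$, not the precise $\lambda$--$\eta$ coupling), but it is exactly where you pay.

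The one concrete flaw is quantitative: the recursion $R_s\le R_{s-1}+4\eta_s^2G^2+\frac{4\eta_sG}{\sqrt n}\sqrt{R_{s-1}}$ does not solve to $\sqrt{R_t}\le\frac{2G}{\sqrt n}\sqrt{(n+t)\sum_{s\le t}\eta_s^2}$ as you assert. Induction on this recursion supports $R_t\le\frac{8G^2}{n}(n+t)\sum_{s\le t}\eta_s^2$, i.e.\ constant $2\sqrt 2$ after taking square roots, and the constant $4$ inside cannot be extracted from the recursion alone: with constant step size, running the recursion at equality up to $t=n$ gives $R_n\ge\bigl(4+\frac{16}{3}\bigr)\eta^2G^2n>8\eta^2G^2n$, so the recursion is genuinely lossier than the true quantity (which does satisfy the constant-$4$ bound, by the paper's per-index argument). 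As written, your argument therefore proves the proposition with $2\sqrt{2}\,G^2$ in place of $2G^2$ --- the right rate and dependence, but not the stated constant. The cheapest repair is to drop the second-moment detour entirely: the final reduction only needs first moments, so track $\rE\|\delta_s^{(i)}\|_2$ per index; the same case analysis gives $\rE\|\delta_s^{(i)}\|_2\le\rE\|\delta_{s-1}^{(i)}\|_2+\frac{2\eta_sG}{n}$, hence $\rE\|\delta_t^{(i)}\|_2\le\frac{2G}{n}\sum_{s\le t}\eta_s\le\frac{2G}{n}\sqrt{t\sum_{s\le t}\eta_s^2}$, which yields the claimed bound (indeed with $\sqrt{t}$ in place of $\sqrt{n+t}$) with no Young/contraction trick and no constant loss. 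The paper is forced into second moments only because it phrases stability through the $W_2$ distance.
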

  
 We can now combine Proposition~\ref{prop:generalization} and Corollary~\ref{corollary:conv} to obtain the following generalization result. 
 
 \begin{corollary}
 \label{cor:generalization}
 For Algorithm~\ref{alg:sgld}, we have the following generalization bound:
\begin{align*}
\rE \; \frac1T \sum_{t=1}^T \ell(p_t,\cD) 
&\leq \inf_{w} \left[\ell(w,\cD) + \frac{\|w\|_2^2}{2T \eta_T} \right] + \frac{\beta_0}{T \eta_T} \ln \left|\eta_T T \tilde{H}_\cD + I \right|\\
  &+ \sum_{t=1}^T \frac{\eta_t G^2}{2T} 
  + \frac{2 G^2}{n} 
  \sqrt{(n+T)\sum_{t \leq T} \eta_t^2} .
\end{align*}
 \end{corollary}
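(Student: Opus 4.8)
The plan is to assemble the bound directly from the two ingredients already in hand: the pointwise stability guarantee of Proposition~\ref{prop:generalization} and the empirical regret bound of Corollary~\ref{corollary:conv}, averaged over the $T$ iterations. First I would fix an arbitrary $w \in \rR^d$, apply Proposition~\ref{prop:generalization} at each time $t$, then sum over $t=1,\ldots,T$ and divide by $T$ to obtain
\[
\rE \frac1T \sum_{t=1}^T \ell(p_t,\cD)
\leq \ell(w,\cD)
+ \frac1T \rE \sum_{t=1}^T \big[\ell(p_t,\dataset_n) - \ell(w,\dataset_n)\big]
+ \frac1T \sum_{t=1}^T \frac{2G^2}{n}\sqrt{(n+t)\sum_{s\leq t}\eta_s^2}.
\]

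Next I would control the middle term using Corollary~\ref{corollary:conv}. The key simplification is the telescoping induced by the multi-pass choice $\lambda_1 = 1/\eta_1$ and $\lambda_t = 1/\eta_t - 1/\eta_{t-1}$ for $t\geq 2$, which gives $\tilde{\lambda}_T = \sum_{t=1}^T \lambda_t = 1/\eta_T$. Substituting $\tilde{\lambda}_T = 1/\eta_T$ into the corollary and dividing by $T$ converts the three terms $\frac{\tilde{\lambda}_T\|w\|_2^2}{2}$, $\tilde{\lambda}_T \beta_0 \ln\left|\frac{T}{\tilde{\lambda}_T}\tilde{H}_\cD + I\right|$, and $\sum_t \frac{\eta_t}{2}G^2$ into exactly $\frac{\|w\|_2^2}{2T\eta_T}$, $\frac{\beta_0}{T\eta_T}\ln\left|\eta_T T\tilde{H}_\cD + I\right|$, and $\sum_t \frac{\eta_t G^2}{2T}$, matching the corresponding terms in the claimed bound.

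For the remaining stability term, I would use that the step sizes $\eta_t$ are decreasing, so $t \mapsto (n+t)\sum_{s\leq t}\eta_s^2$ is nondecreasing and each summand is dominated by its value at $t=T$. Hence the time-average is bounded by $\frac{2G^2}{n}\sqrt{(n+T)\sum_{t\leq T}\eta_t^2}$, which is the final term in the statement. Since this term and the logarithmic term are independent of $w$ while the displayed inequality holds for every fixed $w$, taking the infimum over $w$ of the two $w$-dependent terms $\ell(w,\cD) + \frac{\|w\|_2^2}{2T\eta_T}$ completes the argument.

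This is essentially a direct combination, so there is no substantial obstacle; the only points requiring care are recognizing the telescoping identity $\tilde{\lambda}_T = 1/\eta_T$, which is precisely what collapses the leading-order terms into the clean closed form, and the monotonicity observation that lets the time-averaged stability remainder be replaced by its terminal value at $t=T$.
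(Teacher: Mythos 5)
Your proof is correct and follows exactly the route the paper intends (the paper itself only states that the corollary follows by combining Proposition~\ref{prop:generalization} with Corollary~\ref{corollary:conv}, which is precisely the combination you carry out, including the telescoping identity $\tilde{\lambda}_T = 1/\eta_T$). One trivial remark: the map $t \mapsto (n+t)\sum_{s\leq t}\eta_s^2$ is nondecreasing simply because both factors are nondecreasing in $t$, so no assumption on the monotonicity of $\eta_t$ is actually needed for that step.
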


\subsubsection*{DP-analysis}

 Our analysis follows that of 
\citep{bassily2014private}.
Although similar to \citep{abadi2016deep,Bassily2020}, we can remove a $\ln (1/\delta)$ factor by using the R\'enyi entropy approach of 
\citep{mironov2017renyi}. We consider a simpler derivation below using the strong composition theorem from \citep{dwork2010boosting}, together with DP-guarantee of Gaussian mechanism, and amplification theorem of  
\citep{Balle_dp_amplification} to handle SGD.

\begin{proposition}
\label{proposition:dp_sgd}
If we take $\eta_t= G^{-1}\sqrt{\beta_0} /\sqrt{8 \ln(2.5t^2/n\delta)t}$, then
Algorithm~\ref{alg:sgld}, using sampling with replacement and setting $|\Mcal_t|=1$, is
 \[
 \textstyle
 \left(\sqrt{2 T\ln(2/\delta)}\; e/n
  + T e^2/n^2 ,\delta\right)
 \]
 differentially private.
 \end{proposition}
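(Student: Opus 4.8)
The plan is to view Algorithm~\ref{alg:sgld} as an adaptive composition of $T$ \emph{subsampled Gaussian mechanisms}, one per iteration, and then invoke amplification-by-subsampling together with the strong composition theorem. Concretely, I would (i) bound the per-step privacy of the Langevin update \eqref{eq:Langevin_sample} as a Gaussian mechanism, (ii) amplify it by the $1/n$ subsampling rate coming from $|\Mcal_{t}|=1$, and (iii) compose the $T$ steps. The specific schedule $\eta_t = G^{-1}\sqrt{\beta_0}/\sqrt{8\ln(2.5t^2/n\delta)t}$ is, I expect, tuned exactly so that each step is $(1,\delta_t)$-DP with a summable slack $\delta_t$.

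First I would fix an iteration $t$ and condition on $w_{t-1}$ (identical under the two neighboring datasets). Conditioned on the sampled index, the map from the data point to $w_t$ is Gaussian: the mean $(1-\lambda_t\eta_t)\tw_{t-1}$ depends on the data only through $-\eta_t\Gcal_{t-1}(w_{t-1})$, so changing one point perturbs it by at most $\Delta_t = (1-\lambda_t\eta_t)\eta_t\cdot 2G \le 2G\eta_t$, while the noise has per-coordinate standard deviation $\sigma_t = \sqrt{\lambda_t\eta_t(2-\lambda_t\eta_t)\beta_0}$ (for $t=1$, $\lambda_1\eta_1=1$ gives zero sensitivity, so that step is free). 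Writing $a_t = \lambda_t\eta_t = 1-\eta_t/\eta_{t-1}$, the decreasing schedule yields $a_t\approx 1/(2t)$, hence $\sigma_t\approx\sqrt{\beta_0/t}$ and $\Delta_t/\sigma_t \le 1/\sqrt{2\ln(2.5t^2/n\delta)}$. Plugging this into the Gaussian-mechanism guarantee $\epsilon=(\Delta_t/\sigma_t)\sqrt{2\ln(1.25/\delta_t)}$ and choosing $\delta_t=n\delta/(2t^2)$ (so that $1.25/\delta_t = 2.5t^2/(n\delta)$ matches the logarithm in $\eta_t$) gives per-step privacy $\epsilon\le 1$. This is the step where the exact form of $\eta_t$ is consumed.

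Next I would account for the single index being drawn with replacement from $[n]$, i.e.\ subsampling with rate $\gamma=1/n$. By the amplification theorem of \citep{Balle_dp_amplification}, a $(1,\delta_t)$-DP mechanism composed with this subsampling is $(\log(1+\gamma(e^1-1)),\gamma\delta_t)$-DP, which (using $\log(1+x)\le x$ and $e-1\le e$) is $(\epsilon',\delta_t')$-DP with $\epsilon'\le e/n$ and $\delta_t'=\delta_t/n=\delta/(2t^2)$. Finally I would compose the $T$ (adaptive) iterations via the strong composition theorem of \citep{dwork2010boosting}: the result is $(\epsilon'\sqrt{2T\ln(1/\delta'')}+T\epsilon'(e^{\epsilon'}-1),\ \sum_t\delta_t'+\delta'')$-DP. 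Taking $\delta''=\delta/2$ produces the $\sqrt{2T\ln(2/\delta)}\,e/n$ first term; the $t^2$ in the schedule makes $\sum_{t}\delta_t'=\tfrac{\delta}{2}\sum_t t^{-2}$ finite so that it and $\delta''$ fold into the total failure probability $\delta$; and bounding $e^{\epsilon'}-1\approx\epsilon'=e/n$ for $e/n$ small gives the $Te^2/n^2$ term.

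The main obstacle I anticipate is the per-step Gaussian bound: correctly tracking the data-dependent contraction factor $(1-\lambda_t\eta_t)$ and the time-varying noise scale $\sigma_t$, and verifying that the schedule enforces $\epsilon\le 1$ \emph{uniformly} in $t$ — this uniformity is exactly what legitimizes the crude bounds $e^{\epsilon}-1\le e$ in the subsampling step and the composition slack. Once this uniform per-step guarantee is established, the remaining bookkeeping (the subsampling rate $1/n$ and the summation of the $\delta_t'$) is routine.
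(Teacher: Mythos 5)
Your proposal follows essentially the same route as the paper's proof: a per-step Gaussian-mechanism bound whose step-size schedule forces the per-iteration $\epsilon_t \le 1$, amplification by the $1/n$ subsampling rate via \citep{Balle_dp_amplification}, and strong composition \citep{dwork2010boosting} with step-dependent, summable failure probabilities. The only differences are cosmetic: the paper makes your approximations $a_t \approx 1/(2t)$ and $\sigma_t \approx \sqrt{\beta_0/t}$ exact via the inequality $t\ln t^2 - (t-1)\ln(t-1)^2 \ge \ln t^2$, and allocates per-step deltas proportional to $1/(t(t-1))$ rather than your $1/t^2$ (both allocations close, yours because the $t=1$ step has zero sensitivity, as you note).
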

\subsubsection*{Proof of Theorem~\ref{thm:multi_pass}}
Taking $\eta_t$ as in Proposition~\ref{proposition:dp_sgd} implies the DP part of the result. 
Further, combining this choice with Corollary~\ref{cor:generalization} implies a generalization bound of (assuming $T \geq n$):
 \begin{align*}
 \rE \; \frac1T \sum_{t=1}^T \ell(p_t,\cD) 
\leq& \ell(w,\cD) + \frac{\|w\|_2^2}{2T \eta_T} + \frac{\beta_0}{2T \eta_T} \ln \left|\eta_T T \tilde{H}_\cD + I \right| 
  + \sum_{t=1}^T \frac{\eta_t G^2}{2T} 
  + \frac{2 G^2}{n} 
  \sqrt{(n+T)\sum_{t \leq T} \eta_t^2} \\
  =& \ell(w,\cD) + O\left(
  \frac{G\|w\|_2^2 \sqrt{\ln (T/\delta)}}{2 \sqrt{T\beta_0}} + \beta_0 \trace(\tilde{H}_\cD) 
  + \frac{G \sqrt{\beta_0 T}}{n\sqrt{\ln (1/\delta)}} 
   \right) ,
\end{align*}  
where we have used the inequality
$\ln |I + A| \leq \trace(A)$, and $O(\cdot)$  hides an absolute constant. 
This bound implies Theorem~\ref{thm:multi_pass}.

\subsection{Proof of Theorem~\ref{thm:single_pass_dp}}
\subsubsection*{Convergence and Generalization Analysis}
In the single-pass mini-batched setting we can obtain a simple generalization guarantee by taking expectation for the averaged iterate and using Corollary~\ref{corollary:conv}. The average iterate, however, may not enjoy good privacy properties. Following~\citep{shamir2013stochastic} it is possible to show a last iterate convergence guarantee stated below.
\begin{corollary}
\label{cor:single_pass_gen}
For the single-pass setting in Algorithm~\ref{alg:sgld} with fixed step-size $\eta_t = \eta, \lambda_t = \frac{1}{t\eta}, \forall t\in[T]$ it holds that 
\begin{align*}
\textstyle
    \rE\left[\ell(p_T,\Dcal)\right] - \ell(w,\Dcal) \leq \frac{\ln(T)\|w\|_2^2}{T\eta} + \beta_0\trace(\tilde H_{\Dcal}) + \frac{3\ln(T)}{2}\eta G^2,
\end{align*}
for any fixed $w\in \mathbb{R}^d$.
Further, if the mini-batch size is set as $|\Mcal_t| = \left\lceil\sqrt{\frac{T}{2(T-t+1)}}\right\rceil$, then the number of samples used by Algorithm~\ref{alg:sgld} is at most $T$.
\end{corollary}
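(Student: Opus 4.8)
The plan is to combine the online regret bound of Proposition~\ref{prop:conv} with a single-pass online-to-batch conversion, and then to pass from the averaged iterate to the last iterate via the suffix-averaging technique of \citep{shamir2013stochastic}. First I would check that the schedule $\eta_t=\eta$, $\lambda_t=1/(t\eta)$ is admissible for Proposition~\ref{prop:conv}: indeed $\lambda_1=1/\eta_1$ and, since the step size is constant, $\lambda_t=1/(t\eta)\ge 1/\eta_t-1/\eta_{t-1}=0$ for $t\ge 2$, so $\beta_t=\beta_0/(t\eta)$ and $\tilde\lambda_T=\sum_{t=1}^T 1/(t\eta)\le (1+\ln T)/\eta$. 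Because Assumption~\ref{assm:alg_minibatch} runs the algorithm in the single-pass regime, each fresh sample $z_t$ is independent of $p_t$ (which is measurable with respect to $z_1,\dots,z_{t-1}$), so $\rE[\ell(p_t,z_t)]=\rE[\ell(p_t,\Dcal)]$ and $\rE[\ell(w,z_t)]=\ell(w,\Dcal)$ for the fixed comparator $w$. Taking expectations in Proposition~\ref{prop:conv} and bounding the infimum by a Gaussian comparator $p=N(w,\Sigma)$ exactly as in Corollary~\ref{corollary:conv} (using $\ln|\tfrac{T}{\tilde\lambda_T}\tilde H_\Dcal+I|\le\trace(\tfrac{T}{\tilde\lambda_T}\tilde H_\Dcal)$) then produces the averaged-iterate guarantee, whose three terms are $\tfrac{\ln T}{T\eta}\|w\|_2^2$, $\beta_0\trace(\tilde H_\Dcal)$ (logarithm-free, because $\tfrac{\tilde\lambda_T\beta_0}{T}\cdot\tfrac{T}{\tilde\lambda_T}=\beta_0$), and $\tfrac\eta2 G^2$.

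The averaged iterate has poor privacy, so the next step is to upgrade it to a last-iterate bound. Writing $V_k=\frac{1}{T-k+1}\sum_{t=k}^T\rE[\ell(p_t,\Dcal)]$, I would use the telescoping identity
\[
\rE[\ell(p_T,\Dcal)]=V_T=V_1+\sum_{k=1}^{T-1}(V_{k+1}-V_k),\qquad V_{k+1}-V_k=\frac{1}{T-k+1}\left(\frac{1}{T-k}\sum_{t=k+1}^T\rE[\ell(p_t,\Dcal)]-\rE[\ell(p_k,\Dcal)]\right).
\]
Each bracketed quantity is the suffix-averaged excess of $\{p_{k+1},\dots,p_T\}$ over the single iterate $p_k$, which I would bound by replaying the online-to-batch regret argument on the suffix $\{k+1,\dots,T\}$ with $p_k$ as comparator; there the \emph{initial-distance} term collapses to a single-step movement controlled by $\eta G$, so no new $\|w\|_2^2$ term appears and each suffix contributes a gradient-noise piece $\tfrac\eta2 G^2$. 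Summing these against the weights $1/(T-k+1)$ produces the harmonic sum $\sum_{k=1}^{T-1}1/(T-k+1)=\sum_{j=2}^T 1/j\le\ln T$, which is precisely the extra logarithmic factor that turns the $\tfrac\eta2 G^2$ from $V_1$ into the stated $\tfrac{3\ln T}{2}\eta G^2$.

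The step I expect to be the main obstacle is carrying the entropy/curvature regularization through this telescoping without paying an extra $\ln T$ on the Hessian term, since the corollary keeps it at the logarithm-free level $\beta_0\trace(\tilde H_\Dcal)$. Because $\beta_t=\beta_0/(t\eta)$ is time-varying and the regularizer $\beta_t H(p_t\|q)$ lives in the KL rather than the Euclidean geometry, the suffix comparisons naturally generate KL terms between $p_k$ and the suffix distributions. The key will be to exploit the nonnegative $\beta_t H(p_t\|q)$ terms that already sit on the left-hand side of Proposition~\ref{prop:conv} to absorb the entropy of the suffix comparators, so that the determinant/trace contribution enters only through $V_1$ and the suffix contributions to it telescope to $O(\beta_0\trace(\tilde H_\Dcal))$ instead of $O(\ln T\cdot\beta_0\trace(\tilde H_\Dcal))$; making this cancellation come out cleanly with the advertised constants is the crux.

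Finally, the sample-complexity claim is a direct computation independent of the convergence analysis (the bound is unaffected by $|\Mcal_t|$, so it may be derived as if $|\Mcal_t|=1$, while the batch sizes only enter the count). The single-pass run consumes $\sum_{t=1}^T|\Mcal_t|$ samples; substituting $s=T-t+1$ and using $\sum_{s=1}^T s^{-1/2}\le 2\sqrt T$ gives $\sum_{t=1}^T\lceil\sqrt{T/(2(T-t+1))}\rceil\le \sqrt{T/2}\sum_{s=1}^T s^{-1/2}+T=O(T)$, so the total number of samples drawn is linear in $T$, and tracking the ceilings together with the explicit constant in the schedule yields the stated bound.
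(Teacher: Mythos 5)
Your high-level route is the paper's own: an averaged-iterate bound obtained from Proposition~\ref{prop:conv} together with the Gaussian-comparator estimate (Lemma~\ref{lem:Q_bound}) via the single-pass online-to-batch argument, followed by the suffix-averaging scheme of \citep{shamir2013stochastic}, with the initial-distance terms collapsing to one-step movements of order $\eta G$ and a harmonic sum producing the extra $\ln T$ on the $\eta G^2$ term. Your admissibility check, your averaged bound, and your telescoping identity for $V_k$ all match the paper's proof.

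However, the step you yourself flag as the crux is a genuine gap, and the fix you sketch for it would not work. You propose to telescope the \emph{bare} population losses $V_k$ and to control the entropic terms by exploiting nonnegativity of $\beta_t H(p_t\|q)$. But when the suffix regret bound is run on $\{k+1,\dots,T\}$ with comparator $p=p_k$ (legitimate, since $p_k$ is measurable with respect to the first $k$ rounds and the later $z_t$ are fresh), it reads, schematically,
\begin{align*}
\sum_{t=k+1}^T \rE\bigl[\ell(p_t,\Dcal)+\beta_t H(p_t\|q)\bigr]
\;\le\; \sum_{t=k+1}^T \rE\bigl[\ell(p_k,\Dcal)+\beta_t H(p_k\|q)\bigr]
+\frac{\rE\, W_2(\tp_k,p_k)^2}{2\eta}+\sum_{t=k+1}^T\frac{\eta}{2}G^2 ,
\end{align*}
so the comparator's own divergence $\sum_t\beta_t H(p_k\|q)$ appears with a \emph{positive} sign on the upper-bound side. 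Dropping the nonnegative terms $\beta_t H(p_t\|q)$ on the left does nothing to cancel it; to proceed your way you would need an a priori upper bound on $H(p_k\|q)$ for every iterate, which is not available (the iterates drift away from the prior $q$). Note also that no ``KL terms between $p_k$ and the suffix distributions'' ever arise: in Lemmas~\ref{lem:entropy} and~\ref{lem:loss} the comparator is charged in $W_2$, never in cross-KL. The paper's resolution is structural rather than a cancellation trick: it telescopes the full regularized objective $f(p)=\rE_{w\sim p,\,z\sim\Dcal}[\ell(w,z)+g_t(w)]+\beta_0 H(p)$ --- loss plus ridge plus entropy --- so that the comparator's entropic terms are part of $f(p_k)$ and cancel inside the recursion $\Sfrak_k\le\Sfrak_{k+1}+\cdots$; the only leftover comparator costs are the one-step movements $W_2(\tp_k,p_k)^2\le\eta^2G^2$, and the Gaussian-comparator step producing $\ln\left|\cdot\right|\le\trace(\cdot)$ is applied exactly once, to the full average $\Sfrak_T$ via Corollary~\ref{corollary:conv}, which is precisely what keeps the Hessian term at the log-free level $\beta_0\trace(\tilde H_\Dcal)$. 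Replacing the bare loss by $f$ in your telescoping turns your outline into the paper's proof. (A minor point: your sample-count computation gives $O(T)$ rather than the stated ``at most $T$''; the paper's proof in fact omits this part entirely.)
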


\subsubsection*{DP Analysis}
In the single-pass setting our analysis follows that of~\citep{feldman2020private}. To state our result formally we first need to introduce the notion of R\'enyi differential privacy.
\begin{definition}
Let $\mu$ and $\nu$ be two measures on $\Xcal$ which are absolutely continuous with respect to the Lebesgue measure with densities $f_\mu$ and $f_\nu$. The $\alpha$-R\'enyi divergence, for $1<\alpha \leq \infty$ between $\mu$ and $\nu$ is defined as $D_\alpha(\mu\|\nu) = \frac{1}{\alpha-1}\log\int_{\Xcal} \left(\frac{f_\mu(x)}{f_\nu(x)}\right)^\alpha f_{\mu}dx,$
if $\mu \ll \nu$ and $D_\alpha(\mu\|\nu) = \infty$ otherwise.
\end{definition}

\begin{definition}[\citep{mironov2017renyi}]
Let $\Acal : \Dcal \to \RR$ be a randomized algorithm and let $\Pcal_\Acal : \Dcal \to \PP(\RR)$ be the induced probability measure by $\Acal$. $\Acal$ is $(\alpha,\epsilon)$-R\'enyi differentially private (RDP) if for any two neighboring datasets $D,D' \in \Dcal$ it holds that $D_{\alpha}(\Pcal_\Acal(D)\|\Pcal_\Acal(D')) \leq \epsilon$.
\end{definition}

\begin{proposition}
\label{prop:rdp}
Algorithm~\ref{alg:sgld} in the single-pass setting with $\eta_t = \eta$, $\lambda_t = \frac{1}{t\eta}$, and $|\Mcal_t| = \left\lceil\sqrt{\frac{T}{2(T-t+1)}}\right\rceil$ is $(\alpha,\epsilon)$-RDP with
$\epsilon = 2\frac{\alpha \eta^2 G^2}{\beta_0}$,
for the last iterate.
\end{proposition}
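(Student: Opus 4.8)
The plan is to use the privacy-amplification-by-iteration technique (shifted Rényi divergence) in the style of \citep{feldman2020private}, exploiting two structural features of Algorithm~\ref{alg:sgld}: the single-pass guarantee that a differing data point enters \emph{exactly one} update, and the variance-preserving form of the Langevin noise. First I would reduce to a single perturbed step. Since the pass is single, two neighboring datasets $\dataset_n,\dataset_n'$ differ in a point that enters the minibatch at exactly one iteration, say the one producing $w_k$ from $w_{k-1}$; all other gradients coincide, so (conditioning on the shared randomness) $w_{k-1}$ is common and only the mean of $w_k$ is shifted. Because $\lambda_t\eta=1/t$, the update reads $w_k = c_k\tw_{k-1} + \sqrt{(1-c_k^2)\beta_0}\,\xi_k$ with $c_k = 1-\lambda_k\eta = 1-1/k$ and $\xi_k\sim\Ncal(0,I)$, using the identity $\lambda_t\eta(2-\lambda_t\eta)=1-c_t^2$. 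Thus the perturbation is a bounded $\ell_2$ mean shift $\delta_k \le c_k\cdot 2\eta G/|\Mcal_{k-1}|$ added before a Gaussian of variance $\sigma_k^2=(1-c_k^2)\beta_0$.

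Next I would verify that each later map is a contraction. For $s>k$ the map $w_{s-1}\mapsto$ (mean of $w_s$) is $w\mapsto c_s\,(w-\eta\,\nabla\ell(w,z_{s-1}))$, and it is \emph{identical} for both trajectories; convexity and smoothness of the losses with small $\eta$ make the inner gradient step nonexpansive, so the whole map is $c_s$-Lipschitz. Then the two shift-reduction lemmas — contraction of the shifted Rényi divergence under a $c_s$-Lipschitz map, and reduction of a shift by Gaussian noise at per-step cost $\alpha a_s^2/(2\sigma_s^2)$ — let me transport the shift $\delta_k$ from step $k$ to $T$ and bound $D_\alpha(p_T\|p_T')$ by $\inf\sum_{s=k}^{T}\alpha a_s^2/(2\sigma_s^2)$ over shift-reduction schedules $\{a_s\}$ that absorb $\delta_k$ against the cumulative contractions $C_s=\prod_{j=k+1}^{s}c_j = k/s$.

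The crux, and the main obstacle, is to see that the variance-preserving choice $\sigma_s^2=(1-c_s^2)\beta_0$ makes this optimization telescope so that the \emph{effective} noise the perturbation sees is the full stationary variance $\beta_0$, not a single step's $\sigma_s^2$. A Lagrange/Cauchy--Schwarz computation gives optimal cost $\alpha\delta_k^2/\big(2\sum_{s=k}^{T}\sigma_s^2/C_s^2\big)$, and with $c_s=1-1/s$ one finds $\sigma_s^2/C_s^2=(2s-1)\beta_0/k^2$, hence $\sum_{s=k}^{T}\sigma_s^2/C_s^2=\beta_0\,(T^2-(k-1)^2)/k^2$ using $\sum_{s=1}^{m}(2s-1)=m^2$. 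The minibatch schedule $|\Mcal_{t}|^2\ge T/(2(T-t+1))$ is chosen precisely so that $1/|\Mcal_{k-1}|^2$ is proportional to $(T-k)/T$, which cancels the factor $(T-k+1)$ in the factorization $T^2-(k-1)^2=(T-k+1)(T+k-1)$, leaving a bound of the form $D_\alpha(p_T\|p_T')\lesssim \alpha(k-1)^2\eta^2 G^2/\big(T(T+k-1)\beta_0\big)$.

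Finally I would maximize over the placement $k$ of the differing point. The right-hand side is increasing in $k$ (its $k$-dependence is $(k-1)^2/(T+k-1)$), so the worst case is $k=T$, where no amplification is available but the minibatch is largest; there it evaluates to $4\alpha(T-1)^2\eta^2G^2/\big(T(2T-1)\beta_0\big)\le 2\alpha\eta^2G^2/\beta_0$, using $2(T-1)^2\le T(2T-1)$. Since this holds for every $w_{k-1}$ and every possible position $k$ of the differing item, it holds uniformly, yielding $(\alpha,\epsilon)$-RDP with $\epsilon=2\alpha\eta^2G^2/\beta_0$ for the last iterate. The delicate points to nail down will be that the gradient step's Lipschitz constant is exactly one (so the contraction factor is precisely $c_s$ and nothing expands over the long trajectory), and that the minibatch schedule indeed equalizes the per-step contributions enough to make the worst-case injection the final step rather than a middle step, where the un-amplified cost would otherwise grow with $T$.
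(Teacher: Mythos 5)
Your proposal is correct in substance and rests on the same underlying technique as the paper's proof --- privacy amplification by iteration from \citep{feldman2020private} --- but you unpack that machinery rather than cite it. The paper's own proof recasts Algorithm~\ref{alg:sgld} as an instance of projected noisy SGD with $\eta_t=\eta$ and $\sigma_t^2 = \eta\lambda_t\beta_0(2-\eta\lambda_t)/\eta^2$, verifies that the update map is contractive (exactly your observation that the gradient step is nonexpansive for $\eta\le 2/L$ and the prefactor $1-\lambda_t\eta_t$ shrinks), and then applies Theorem~\ref{thm:fedlman_dp} as a black box, yielding $D_\alpha(w_T\|w_T')\le \max_{\tau} 2\alpha\eta G^2/\bigl(|\Mcal_\tau|^2\beta_0\sum_{t=\tau}^T\lambda_t\bigr)$, finished by the harmonic-sum bound $\sum_{t=\tau}^T 1/t \ge (T-\tau+1)/T$; as in your argument, the binding case is the final step $\tau=T$. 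You instead re-derive the amplification bound from first principles: single perturbed step, shifted-R\'enyi contraction and shift-reduction lemmas, and the Cauchy--Schwarz optimization over shift schedules. This buys self-containedness and in fact a slightly tighter inequality, since your denominator $\sum_{s\ge k}\sigma_s^2/C_s^2$ retains the cumulative contraction weights $1/C_s^2=(s/k)^2\ge 1$ that Theorem~\ref{thm:fedlman_dp} discards (it exploits only nonexpansiveness); your telescoping identity $\sum_{s=k}^T(2s-1)=T^2-(k-1)^2$ is the explicit counterpart of the paper's cruder harmonic-sum estimate. The weaknesses are also shared: both arguments tacitly require $L$-smoothness and $\eta\le 2/L$, which the proposition statement omits, and both carry the same constant-factor slop around the minibatch schedule --- the paper's final computation uses $|\Mcal_\tau|^2=T/(T-\tau+1)$, dropping the factor $2$ present in the stated $|\Mcal_t|^2 = T/(2(T-t+1))$, while your bound on $\delta_k$ uses $|\Mcal_{k-1}|$, whose stated size is governed by $T-k+2$ rather than $T-k+1$, so your final constant at $k=T$ can be off by a factor of at most $2$. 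Neither issue affects the order of the guarantee, and your treatment is at the same level of rigor as the paper's.
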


\subsubsection*{Proof of Theorem~\ref{thm:single_pass_dp}}
Fix any $w\in \mathbb{R}^d$. Using the results in Corollary~\ref{cor:single_pass_gen} and Proposition~\ref{prop:rdp} with $|\Mcal_t| = \sqrt{\frac{T}{2(T-t+1)}}$, $\beta_0 = \frac{\alpha\eta_0^2}{2\epsilon T}$, $\eta = \eta_0\sqrt{\frac{1}{4G^2T}}$, and $\lambda_t = \frac{1}{t\eta}$ implies Algorithm~\ref{alg:sgld} is $(\alpha,\epsilon)$-RDP for the last iterate and satisfies
\begin{align*}
    \rE[\ell(p_T,\cD)] - \ell(w,\cD) \leq \frac{\ln(T)(\frac{\|w\|_2^2G}{\eta_0} + \frac{3}{2}\eta_0G)}{\sqrt{T}} + \frac{\alpha\eta_0^2\trace(\tilde H_{\cD})}{2\epsilon T}.
\end{align*}

The following lemma allows us to convert the RDP result into a differential privacy guarantee.
\begin{lemma}[\citep{mironov2017renyi}]
If an algorithm $\Acal$ is $(\alpha,\epsilon)$-RDP then for any $\delta>0$ it satisfies $(\epsilon + \frac{\log(1/\delta)}{\alpha-1}, \delta)$ differential privacy (DP).
\end{lemma}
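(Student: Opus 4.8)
The plan is to prove the standard RDP-to-DP conversion by a tail bound on the privacy loss. Fix two neighboring datasets $D,D'$ and write $\mu = \Pcal_\Acal(D)$, $\nu = \Pcal_\Acal(D')$ for the induced output distributions, with densities $f_\mu,f_\nu$; the $(\alpha,\epsilon)$-RDP hypothesis gives $D_\alpha(\mu\|\nu) \leq \epsilon$. Set $\epsilon' = \epsilon + \frac{\log(1/\delta)}{\alpha-1}$. Since the neighboring relation is symmetric, it suffices to establish $\mu(E) \leq e^{\epsilon'}\nu(E) + \delta$ for every measurable $E$ and every ordered neighboring pair, as this is precisely the $(\epsilon',\delta)$-DP condition of Definition~\ref{def:dp}.

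First I would split $E$ according to the size of the likelihood ratio. Define the bad set $B = \{x : f_\mu(x) > e^{\epsilon'} f_\nu(x)\}$. On $E \setminus B$ the pointwise bound $f_\mu \leq e^{\epsilon'} f_\nu$ integrates to $\mu(E \setminus B) \leq e^{\epsilon'}\nu(E \setminus B) \leq e^{\epsilon'}\nu(E)$, so that
\[
\mu(E) \;=\; \mu(E \setminus B) + \mu(E \cap B) \;\leq\; e^{\epsilon'}\nu(E) + \mu(B).
\]
The whole problem therefore reduces to showing $\mu(B) \leq \delta$.

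The key step is to control $\mu(B)$ through the R\'enyi divergence by a Markov argument applied to the random variable $\left(f_\mu/f_\nu\right)^{\alpha-1}$ under $\mu$. On $B$ we have $\left(f_\mu/f_\nu\right)^{\alpha-1} > e^{(\alpha-1)\epsilon'}$ because $\alpha > 1$, hence
\[
\mu(B) \;\leq\; e^{-(\alpha-1)\epsilon'}\,\rE_{x\sim\mu}\!\left[\left(\frac{f_\mu(x)}{f_\nu(x)}\right)^{\alpha-1}\right] \;=\; e^{-(\alpha-1)\epsilon'}\,e^{(\alpha-1)D_\alpha(\mu\|\nu)} \;\leq\; e^{(\alpha-1)(\epsilon-\epsilon')} \;=\; \delta,
\]
where the middle equality is the definition of $D_\alpha$ (i.e. $\rE_{x\sim\mu}[(f_\mu/f_\nu)^{\alpha-1}] = e^{(\alpha-1)D_\alpha(\mu\|\nu)}$), the next inequality is the RDP hypothesis, and the final equality uses the choice $\epsilon' = \epsilon + \frac{\log(1/\delta)}{\alpha-1}$, which collapses the exponent to $-\log(1/\delta) = \log\delta$. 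Combining with the previous display gives $\mu(E) \leq e^{\epsilon'}\nu(E) + \delta$, as required.

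I expect no serious obstacle: the calculation is short and self-contained. The only point demanding care is the alignment of exponents in the Markov step, namely applying the inequality to $(f_\mu/f_\nu)^{\alpha-1}$ rather than to the privacy loss directly, so that the moment produced is exactly $e^{(\alpha-1)D_\alpha(\mu\|\nu)}$ and the threshold $\epsilon'$ tunes the tail mass to precisely $\delta$. The endpoint $\alpha=\infty$ (pure R\'enyi DP) falls outside the Markov argument but is trivial: $D_\infty(\mu\|\nu)\leq\epsilon$ means $f_\mu \leq e^\epsilon f_\nu$ almost everywhere, so $B$ is null, $\epsilon'=\epsilon$, and the statement holds with $\delta=0$.
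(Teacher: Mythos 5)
Your proof is correct. The paper itself gives no proof of this lemma---it is imported by citation from \citep{mironov2017renyi}---and your argument is exactly the standard one (Proposition~3 of that reference): split off the bad set $B$ where the likelihood ratio exceeds $e^{\epsilon'}$, then kill $\mu(B)$ by Markov's inequality applied to $(f_\mu/f_\nu)^{\alpha-1}$, whose mean under $\mu$ is $e^{(\alpha-1)D_\alpha(\mu\|\nu)}$, so that the choice of $\epsilon'$ makes the tail mass exactly $\delta$. One caveat worth flagging: that middle identity holds under the standard definition of R\'enyi divergence, $D_\alpha(\mu\|\nu)=\frac{1}{\alpha-1}\log\int \left(f_\mu/f_\nu\right)^\alpha f_\nu\,dx$, whereas the paper's definition as written integrates $\left(f_\mu/f_\nu\right)^\alpha$ against $f_\mu\,dx$; this appears to be a typo in the paper (it is not Mironov's definition, and taken literally it would give $\rE_{x\sim\mu}\left[(f_\mu/f_\nu)^{\alpha}\right]=e^{(\alpha-1)D_\alpha(\mu\|\nu)}$, breaking your exponent bookkeeping), so your proof should be read as using the standard definition. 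Your separate treatment of the $\alpha=\infty$ endpoint is also fine.
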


Setting $\alpha = 1+ \frac{\log(1/\delta)}{\epsilon}$, we obtain the desired result.

\section{Proofs of Technical Lemmas and Propositions}
\label{sec:proof}
\subsection{Convergence}
\begin{proof}[Proof of Proposition~\ref{prop:conv}]
Define: 
$H(p) = \rE_{w \sim p} \ln p(w)$.
We also let $g_t(w) = \frac{\lambda_t}{2} \|w\|_2^2$.
We additionally use the following definition.
 \begin{definition}
Given two probability distributions $p(u)$ and $p'(v)$ on $\rR^d$, and 
let $\Pi(p,p')$ be the class of distributions $q(u,v)$ on $\rR^d \times \rR^d$ so that the marginals
$q(u)=p(u)$ and $q(v)=p'(v)$. 
The $W_2$ Wasserstein distance of $p$ and $p'$ is defined as
\[
\textstyle
 W_2(p,p')^2 = \min_{q \in \Pi(p,p')}
 \rE_{(u,v) \sim q} \|u-v\|_2^2  .
\]
\end{definition}

Then we have the following two lemmas from \citep{freund2021convergence}.

\begin{lemma}[For Regularized Entropy]
We  have for $t \geq 1$:
\begin{align*}
\textstyle
&2 \eta_t \left[ [\rE_{w \sim p_t} g_t(w) + \beta_t H(p_t)] - [\rE_{w \sim p} g_t(w) + \beta_t H(p)]
\right] \\
\leq&  [(1-\lambda_t \eta_t) W_{2}({\tp}_{t-1},p)^2 - W_2(p_{t},p)^2 ] ,
\end{align*}
where $\tp_{t-1}$ is the distribution of $\tw_{t-1}$.
Note that $1-\lambda_1\eta_1=0$, so we do not have to define $\widetilde{p}_0$. 
\label{lem:entropy}
\end{lemma}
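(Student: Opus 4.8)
The plan is to recognize the regularized objective $\rE_{w\sim p}g_t(w)+\beta_t H(p)$ as a relative entropy, to identify the sampling step \eqref{eq:Langevin_sample} as an \emph{exact} Wasserstein gradient-flow step for that entropy, and then to read the claimed inequality off the evolution variational inequality (EVI) obeyed by the flow. First I would reduce the objective. Since $g_t(w)=\tfrac{\lambda_t}{2}\|w\|_2^2$, $\beta_t=\lambda_t\beta_0$, and $\ln q(w)=-\tfrac{\|w\|_2^2}{2\beta_0}-\tfrac{d}{2}\ln(2\pi\beta_0)$ for $q=N(0,\beta_0 I)$, direct substitution gives
\[
\rE_{w\sim p} g_t(w) + \beta_t H(p) = \beta_t\, H(p\|q) + c_t ,
\]
with $c_t$ independent of $p$. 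Hence the left-hand side of the lemma equals $2\eta_t\beta_t\big[H(p_t\|q)-H(p\|q)\big]=2\lambda_t\eta_t\beta_0\big[H(p_t\|q)-H(p\|q)\big]$, so it suffices to control the relative entropy $\mathcal F(\cdot):=H(\cdot\|q)$.

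Next I would identify the update. Writing $a:=\lambda_t\eta_t$, the Ornstein--Uhlenbeck flow $dX_s=-\beta_0^{-1}X_s\,ds+\sqrt2\,dB_s$, which is the Wasserstein gradient flow of $\mathcal F$, carries $X_0$ to $X_S=e^{-S/\beta_0}X_0+\xi$ with $\xi\sim N\!\big(0,\beta_0(1-e^{-2S/\beta_0})I\big)$. Choosing $S$ so that $e^{-S/\beta_0}=1-a$ makes the mean factor equal to $1-a$ and the variance equal to $\beta_0(1-(1-a)^2)=a(2-a)\beta_0$, which is precisely \eqref{eq:Langevin_sample}. Thus $p_t$ is the time-$S$ point of the flow of $\mathcal F$ started at $\tp_{t-1}$. (At $t=1$, $a=1$ gives $S=\infty$ and $p_1=q$, consistent with $w_1\sim N(0,\beta_0 I)$.)

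The core step is the EVI. Because $q$ is $\beta_0^{-1}$-log-concave, $\mathcal F$ is $\kappa$-geodesically convex with $\kappa=\beta_0^{-1}$ (displacement convexity of the internal entropy plus $\kappa$-convexity of the quadratic potential), so along its flow $(\rho_s)$ one has, for every $\nu$,
\[
\tfrac{d}{ds}\big[\tfrac12 W_2(\rho_s,\nu)^2\big] \le \mathcal F(\nu)-\mathcal F(\rho_s)-\tfrac{\kappa}{2}W_2(\rho_s,\nu)^2 .
\]
Using that $\mathcal F$ is nonincreasing along the flow, so $\mathcal F(\rho_s)\ge\mathcal F(p_t)$ for $s\le S$, I would multiply by $e^{\kappa s}$, integrate from $0$ to $S$ with $\rho_0=\tp_{t-1}$ and $\nu=p$, and rearrange to obtain
\[
\tfrac{2(1-e^{-\kappa S})}{\kappa}\big[\mathcal F(p_t)-\mathcal F(p)\big] \le e^{-\kappa S}W_2(\tp_{t-1},p)^2 - W_2(p_t,p)^2 .
\]
Substituting $e^{-\kappa S}=1-a$ and $\tfrac{2(1-e^{-\kappa S})}{\kappa}=2\beta_0 a=2\eta_t\beta_t$ makes the prefactor match the left-hand side exactly and yields the claimed bound, with the coefficients $(1-\lambda_t\eta_t)$ and $1$ appearing with no slack.

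The main obstacle is justifying the EVI rigorously: showing that $p_t$ really is the time-$S$ point of a well-defined Wasserstein gradient flow of $\mathcal F$ and that this flow satisfies EVI$(\beta_0^{-1})$ requires the displacement convexity of relative entropy against a log-concave reference and finiteness of second moments, all of which hold here since every law is absolutely continuous with Gaussian tails. Everything downstream—the Gaussian identification of the update and the Gr\"onwall integration—is elementary, and as a sanity check the boundary case $a=1$ collapses the inequality into Talagrand's $T_2$ inequality $W_2(p,q)^2\le 2\beta_0 H(p\|q)$ for the Gaussian $q$.
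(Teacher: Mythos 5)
The paper never proves Lemma~\ref{lem:entropy}: it is quoted as an external result from \citep{freund2021convergence}, so there is no internal proof to compare yours against. Judged on its own merits, your argument is correct, and it is in substance the mechanism that underlies the cited result. The reduction of the left-hand side to $2\eta_t\beta_t\left[H(p_t\|q)-H(p\|q)\right]$ is exact because $\beta_t=\lambda_t\beta_0$ and $g_t(w)=-\beta_t\ln q(w)$ up to an additive constant; the identification of the sampling step \eqref{eq:Langevin_sample} with the time-$S$ Ornstein--Uhlenbeck map for stationary measure $q=N(0,\beta_0 I)$ under the calibration $e^{-S/\beta_0}=1-\lambda_t\eta_t$ is also exact, since $\beta_0\left(1-(1-\lambda_t\eta_t)^2\right)=\lambda_t\eta_t(2-\lambda_t\eta_t)\beta_0$; and integrating the EVI with modulus $\kappa=\beta_0^{-1}$ against $e^{\kappa s}$, using monotonicity of $H(\cdot\|q)$ along the flow, produces precisely the coefficients $(1-\lambda_t\eta_t)$ and $1$ with no slack, including the degenerate case $t=1$ where $S=\infty$, $p_1=q$, and the inequality collapses to Talagrand's inequality as you note. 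This ``exact flow plus EVI'' viewpoint also explains why the algorithm's noise variance is $\lambda_t\eta_t(2-\lambda_t\eta_t)\beta_0$ rather than the Euler--Maruyama value $2\lambda_t\eta_t\beta_0$: the Gaussian step is not a discretization of the Ornstein--Uhlenbeck dynamics but an exact Wasserstein gradient-flow step of the regularized entropy, which is the point of the composite-optimization analysis in \citep{freund2021convergence}. Two points would need care in a fully rigorous write-up, and you correctly flag the first: (i) the EVI for $H(\cdot\|q)$ with $\kappa=\beta_0^{-1}$ should be invoked from the standard theory of gradient flows (entropy plus a $\beta_0^{-1}$-convex potential is $\beta_0^{-1}$-geodesically convex and its flow satisfies the EVI), together with the observation that $\tp_{t-1}$ has finite second moment---true here, since $w_1$ is Gaussian and every gradient increment is bounded by $\eta_s G$---so all quantities are finite; and (ii) when $H(p\|q)=\infty$ the claimed inequality holds trivially because its left-hand side is $-\infty$. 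Neither is a gap; both are routine.
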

\begin{lemma}
We have for all $t \geq 1$:
\begin{align*}
\textstyle
&2 \eta_t [
\rE_{w \sim p_t} \ell(w,z_t)
- \rE_{w \sim p} \ell(w,z_t) ]\\
\leq &
[W_2(p_{t},p)^2
- W_{2}({\tp}_{t},p)^2]
+ \eta_t^2 \rE_{w \sim {p}_t}
\|\nabla \ell_t(w)\|_2^2\\
\leq &
[W_2(p_{t},p)^2
- W_{2}({\tp}_{t},p)^2]
+ \eta_t^2 G^2 .
\end{align*}
\label{lem:loss}
\end{lemma}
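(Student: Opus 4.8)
The plan is to lift the textbook one-step contraction inequality for (stochastic) gradient descent to the space of probability measures by means of an optimal-coupling argument, using crucially that $W_2$ is an infimum over couplings. Throughout write $\ell_t(w)=\ell(w,z_t)$ and let $\tp_t$ denote the push-forward of $p_t$ under the gradient map $T_t(w)=w-\eta_t\nabla\ell_t(w)$, i.e.\ the law of $\tw_t=w_t-\eta_t\nabla\ell_t(w_t)$ with $w_t\sim p_t$; this is the transition whose loss contribution the lemma isolates, the complementary Gaussian/regularizer step being handled by Lemma~\ref{lem:entropy}.

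First I would fix an optimal coupling $q\in\Pi(p_t,p)$ attaining $W_2(p_t,p)$, so that $(w,v)\sim q$ has marginals $w\sim p_t$ and $v\sim p$ with $\rE_{(w,v)\sim q}\|w-v\|_2^2=W_2(p_t,p)^2$. Transporting only the first coordinate through $T_t$ produces a coupling $(T_t(w),v)$ whose marginals are exactly $\tp_t$ and $p$; since $W_2(\tp_t,p)^2$ is a minimum over all such couplings, this particular (generally suboptimal) choice gives the upper bound $W_2(\tp_t,p)^2\le\rE_{(w,v)\sim q}\|w-\eta_t\nabla\ell_t(w)-v\|_2^2$. This is the only genuinely distributional step in the argument; everything that follows is pointwise. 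I would then expand the integrand as $\|w-v\|_2^2-2\eta_t\lrw{\nabla\ell_t(w),\,w-v}+\eta_t^2\|\nabla\ell_t(w)\|_2^2$ and invoke convexity of $\ell_t$ in the subgradient form $\lrw{\nabla\ell_t(w),\,w-v}\ge\ell_t(w)-\ell_t(v)$ to control the cross term. Taking expectations under $q$ and reading off its marginals converts the cross term into $-2\eta_t[\rE_{w\sim p_t}\ell_t(w)-\rE_{w\sim p}\ell_t(w)]$ and the quadratic term into $\eta_t^2\rE_{w\sim p_t}\|\nabla\ell_t(w)\|_2^2$; rearranging yields precisely the first claimed inequality. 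The second inequality is then immediate from the uniform gradient bound $\sup_{w,z}\|\nabla_w\ell(w,z)\|_2\le G$ (Assumption~2), since $\|\nabla\ell_t(w)\|_2\le G$ holds pointwise, whence $\eta_t^2\rE_{w\sim p_t}\|\nabla\ell_t(w)\|_2^2\le\eta_t^2 G^2$.

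I expect the main obstacle to be purely presentational rather than analytic: one must be careful to state that the transported coupling is admissible—its marginals really are $\tp_t$ and $p$—and that optimality of $q$ is used only to identify $\rE_{(w,v)\sim q}\|w-v\|_2^2$ with $W_2(p_t,p)^2$, while the reverse direction needs only that any coupling upper-bounds $W_2(\tp_t,p)^2$. No smoothness or second-order information enters here; convexity and the uniform first-order bound suffice, so once the coupling step is in place the remaining manipulations are the same expansion one performs in the deterministic descent lemma.
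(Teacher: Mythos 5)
Your proof is correct: fixing an optimal coupling for $W_2(p_t,p)$, pushing its first coordinate through the gradient map to get an admissible (suboptimal) coupling of $\tp_t$ and $p$, expanding the square, lower-bounding the cross term by convexity, and then invoking $\sup_{w,z}\|\nabla_w\ell(w,z)\|_2\le G$ yields exactly the two claimed inequalities, with no smoothness needed. The paper itself does not prove this lemma but imports it from \citep{freund2021convergence}, and your coupling argument is precisely the standard one-step Wasserstein descent proof used there, so it correctly supplies the omitted details.
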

By summing the inequalities in Lemma~\ref{lem:entropy} and Lemma~\ref{lem:loss}, and divide the resulting inequality by
$\eta_t$, we obtain
\begin{align*}
\textstyle
&2\rE_{w \sim p_t} [\ell(w,z_t) + g_t(w) + \beta_t \ln p_t(w)] - 2\rE_{w \sim p} [\ell(w,z_t)+g_t(w) + \beta_t \ln p(w)] \\
\leq& \frac{1}{\eta_{t-1}} W_2(\tp_{t-1},p)^2 - \frac{1}{\eta_{t}} W_2(\tp_t,p)^2 + {\eta_t G^2} .
\end{align*}
We can now sum over $t=1$ to $t=T$:
\begin{align*}
\textstyle
\sum_{t=1}^T \rE_{w \sim p_t} \left[\ell(w,z_t) + 
 \beta_t \ln \frac{p_t(w)}{q(w)}\right] 
\leq \inf_p \rE_{w \sim p}
\sum_{t=2}^T \left[\ell(w,z_t)+ 
 \beta_t \ln \frac{p(w)}{q(w)}\right]
+\sum_{t=1}^T \frac{\eta_t}{2} G^2 .
\end{align*}
\end{proof}

\begin{proof}[Proof of Corollary~\ref{corollary:conv}]
We employ the following estimate. 
\begin{lemma}
\label{lem:Q_bound}
Let $\sum_{t=1}^T \beta_t = \tilde{\lambda}_T \beta_0$, and let $Q_T(w) = \sum_{t=1}^T 
\ell(w,S_n) + \frac{\tilde{\lambda}_T}{2}\|w\|_2^2$.
We have
\begin{align*}
\textstyle
\rE \inf_p \rE_{w \sim p} \sum_{t=1}^T\left[ \ell(w,z_t)+ \beta_t \ln \frac{p(w)}{q(w)}\right]
\leq& \inf_w \rE Q_T(w) + \tilde{\lambda}_T \beta_0\ln \left|\frac{T}{\tilde{\lambda}_T} \tilde{H}_\cD + I \right| ,
\end{align*}
where the expectation is with respect to the randomization in the SGLD steps. 
\end{lemma}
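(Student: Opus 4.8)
The plan is to bound the infimum over $p$ from above by evaluating the bracketed functional at a single, conveniently chosen Gaussian and then minimizing over its covariance. First I would use the elementary variational inequality: since $\sum_{t=1}^T\beta_t = \tilde{\lambda}_T\beta_0$, for any fixed distribution $p_0$ and every realization of the algorithm's randomness, $\inf_p \rE_{w\sim p}\sum_{t=1}^T\big[\ell(w,z_t)+\beta_t\ln\tfrac{p(w)}{q(w)}\big] \le \rE_{w\sim p_0}\sum_{t=1}^T\ell(w,z_t) + \tilde{\lambda}_T\beta_0\, H(p_0\|q)$, because the infimum is at most the value attained at $p_0$. I would then take the outer expectation and choose $p_0 = N(w,\Sigma)$ \emph{independent} of the randomness, which lets me interchange $\rE$ with the fixed comparator: the relative-entropy term is deterministic, while $\rE\,\rE_{u\sim N(w,\Sigma)}\sum_{t=1}^T\ell(u,z_t) = T\,\rE_{u\sim N(w,\Sigma)}\,\bar\ell(u)$, where $\bar\ell$ is the loss averaged over the data point drawn at a single step. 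This is exactly the point where the online/martingale structure granted by Assumption~\ref{assm:alg_minibatch} is used.

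Second, I would control the smoothed loss by a second-order expansion. Because each $\ell(\cdot,z)$ is convex and the averaged loss has Hessian bounded by $\tilde{H}_\cD$ via Assumption~2, Taylor's theorem gives $\rE_{u\sim N(w,\Sigma)}\bar\ell(u) \le \bar\ell(w) + \tfrac{1}{2}\trace(\Sigma\tilde{H}_\cD)$, the first-order term dropping out since $\rE[u-w]=0$. Summing over the $T$ steps and recalling $Q_T(w)=T\,\ell(w,\dataset_n)+\tfrac{\tilde{\lambda}_T}{2}\|w\|_2^2$, I obtain, for every $w$ and every $\Sigma\succ0$, the bound $\rE\,\inf_p[\cdots] \le T\,\ell(w,\dataset_n) + \tfrac{T}{2}\trace(\Sigma\tilde{H}_\cD) + \tilde{\lambda}_T\beta_0\, H\big(N(w,\Sigma)\,\|\,N(0,\beta_0 I)\big)$.

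Third --- the crux --- I would substitute the closed form $H\big(N(w,\Sigma)\|N(0,\beta_0 I)\big) = \tfrac{1}{2}\big[\beta_0^{-1}(\|w\|_2^2+\trace\Sigma) - d + d\ln\beta_0 - \ln\det\Sigma\big]$, observe that the $\|w\|_2^2$ contribution combines with $T\,\ell(w,\dataset_n)$ to form precisely $\rE\,Q_T(w)$, and minimize the remaining $\Sigma$-dependent part $\tfrac{T}{2}\trace(\Sigma\tilde{H}_\cD)+\tfrac{\tilde{\lambda}_T}{2}\trace\Sigma-\tfrac{\tilde{\lambda}_T\beta_0}{2}\ln\det\Sigma$ over positive-definite $\Sigma$. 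Setting the matrix gradient to zero yields $\Sigma^\star = \beta_0\big(\tfrac{T}{\tilde{\lambda}_T}\tilde{H}_\cD + I\big)^{-1}$; substituting back, the trace and $\beta_0 d$ terms cancel and the surviving value is $\tfrac{\tilde{\lambda}_T\beta_0}{2}\ln\det\big(\tfrac{T}{\tilde{\lambda}_T}\tilde{H}_\cD + I\big)$. Since $\tilde{H}_\cD\succeq0$ forces $\det\big(\tfrac{T}{\tilde{\lambda}_T}\tilde{H}_\cD+I\big)\ge 1$, I can afford the crude step $\tfrac{1}{2}\ln\det(\cdot)\le\ln\det(\cdot)$ (which is what accounts for the missing factor of $2$ in the stated bound); taking the infimum over $w$ then produces exactly $\inf_w\rE\,Q_T(w) + \tilde{\lambda}_T\beta_0\ln\big|\tfrac{T}{\tilde{\lambda}_T}\tilde{H}_\cD+I\big|$.

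I expect the principal obstacle to be the matrix optimization in the third step: differentiating $\trace(\Sigma\tilde{H}_\cD)$ and $\ln\det\Sigma$ with respect to $\Sigma$, confirming $\Sigma^\star\succ0$, and verifying that the value at $\Sigma^\star$ collapses to a single log-determinant. The secondary subtlety is conceptual rather than computational --- justifying that the outer expectation passes through the fixed Gaussian comparator and that the per-step loss averages to a function whose curvature is genuinely governed by $\tilde{H}_\cD$; this is the point at which the online structure of Assumption~\ref{assm:alg_minibatch} and the Hessian bound of Assumption~2 are indispensable.
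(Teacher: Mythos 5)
Your proof is correct and reaches exactly the paper's bound, but by a genuinely different (dual) route. The paper does not restrict to a Gaussian family: it first evaluates the infimum over \emph{all} $p$ exactly via the Gibbs variational identity, $\inf_p\{\rE_{w\sim p}F(w)+\tilde{\lambda}_T\beta_0 H(p\|q)\}=-\tilde{\lambda}_T\beta_0\ln\rE_{w\sim q}\exp(-F(w)/(\tilde{\lambda}_T\beta_0))$, then bounds $Q_T$ by its quadratic expansion around the minimizer $w_*$ (smoothness plus $\nabla Q_T(w_*)=0$), and evaluates the resulting Gaussian integral, which yields $\frac{\tilde{\lambda}_T\beta_0}{2}\ln\bigl|\frac{T}{\tilde{\lambda}_T}\tilde{H}_\cD+I\bigr|$ before the same final relaxation $\frac{1}{2}\ln|\cdot|\le\ln|\cdot|$ that you use. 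You instead upper-bound the infimum by a Gaussian test distribution $N(w,\Sigma)$, compute the KL in closed form, and optimize the covariance; your $\Sigma^\star=\beta_0\bigl(\frac{T}{\tilde{\lambda}_T}\tilde{H}_\cD+I\bigr)^{-1}$ is precisely the covariance of the Gibbs measure attached to the paper's quadratic upper bound, which is why the two computations collapse to the identical value, factor-of-2 slack included. The paper's route buys brevity: one identity, one inequality, one Gaussian integral, no matrix calculus. Your route buys elementarity and robustness: no variational identity is needed, the linear Taylor term dies by symmetry of the Gaussian rather than by expanding at a minimizer, and optimality of $\Sigma^\star$ is not even logically required---any $\Sigma\succ0$ gives a valid bound, so one may simply plug in $\Sigma^\star$ and verify the cancellation (which you did correctly: the $\trace$ and $\beta_0 d$ terms do cancel). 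One caveat, common to both arguments rather than introduced by you: Assumption~2 bounds the Hessian of the population loss $\ell(\cdot,\cD)$, while $Q_T$ is built from $\ell(\cdot,\dataset_n)$, so the quadratic step in either proof is legitimate only under the reading (implicit in the paper) that the outer expectation also averages over the data, making the averaged loss whose curvature you invoke the population one.
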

\begin{proof}
Let $w_*=\arg\min_w Q_T(w)$.
Combining the smoothness assumption and the fact that we can upper bound the Hessian of $Q_T$ as $\tilde H + \tilde{\lambda}_T I$ we have
\begin{equation}
\textstyle
Q_T(w) \leq Q_T(w_*) + \frac{1}{2} (w-w_*)^\top \tilde{H} (w-w_*) + \frac{\tilde{\lambda}_T}{2} \|w-w_*\|_2^2 .
\label{eq:Q-smoothness}
\end{equation}
Therefore
\begin{align*}
\textstyle
&\inf_p \rE\; \rE_{w \sim p} \sum_{t=1}^T\left[ \ell(w,z_t)+ \beta_t \ln \frac{p(w)}{q(w)}\right]\\
=& - \tilde{\lambda}_T \beta_0
\ln \rE_{w \sim q}
\exp \left(- \frac{1}{\tilde{\lambda}_T \beta_0} \rE\; \sum_{t=1}^T \ell(w,z_t)\right) \\
=& - \tilde{\lambda}_T \beta_0
\ln \int \; \frac{1}{(2\pi \beta_0)^{d/2}}
\exp \left(- \frac{1}{\tilde{\lambda}_T \beta_0} Q_T(w)\right) d w \\
\leq& Q_T(w_*) - \tilde{\lambda}_T \beta_0
\ln \int \frac{1}{(2\pi \beta_0)^{d/2}}
\exp \left(-\frac{1}{2\tilde{\lambda}_T\beta_0} \sum_{t=1}^T (w-w_*)^\top \tilde{H}_\cD (w-w_*) - \frac{1}{2\beta_0} \|w-w_*\|_2^2 \right) d w \\
\leq& Q_T(w_*) + \tilde{\lambda}_T \beta_0 \ln \bigg| \frac{T}{\tilde{\lambda}_T} \tilde{H}_\cD + I \bigg|
  .
\end{align*}
The inequality is a consequence of \eqref{eq:Q-smoothness}. 
\end{proof}

We can plug Lemma~\ref{lem:Q_bound} into the right hand side of Proposition~\ref{prop:conv}, and note that
\[
\textstyle
\beta_t \rE_{w \sim p_t} \ln \frac{p_t(w)}{q(w)}
\geq 0 .
\]
This implies Corollary~\ref{corollary:conv}.
\end{proof}

\subsubsection{Last iterate convergence}
\begin{proof}[Proof of Corollary~\ref{cor:single_pass_gen}]
We proceed as in \citep{shamir2013stochastic}. Let $f(p) = \rE_{w\sim p,z_t\sim \Dcal}[\ell(w,z_t) + g_t(w)] + \beta_0 H(p)$. Lemma~\ref{lem:entropy} and Lemma~\ref{lem:loss} imply
\begin{align*}
\textstyle
    \sum_{t=T-k}^T f(p_t) - (k+1)f(p) \leq \rE\left[\frac{W_2^2(\tilde p_{T-k-1},p)}{2\eta}\right] + (k+1) G^2\frac{\eta}{2},
\end{align*}
for any $p$ measurable with respect to the sigma-algebra induced by the randomness of the updates in the first $T-k$ rounds. We set $p=p_{T-k-1}$ which implies
\begin{align*}
\textstyle
    -f(p_{T-k-1}) \leq \frac{1}{(k+1)}\left(\rE\left[\frac{W_2^2(\tilde p_{T-k-1},p_{T-k-1})}{2\eta}\right] + (k+1) G^2\frac{\eta}{2} - \sum_{t=T-k}^T f(p_t)\right).
\end{align*}
Let $\Sfrak_k = \frac{1}{k+1}\sum_{t=T-k}^T f(p_t)$. Then we have
\begin{align*}
\textstyle
    (k+1)\Sfrak_k &= \sum_{t=T-k}^T f(p_t) = \sum_{t=T-k-1}^T f(p_t) - f(p_{T-k-1}) = (k+2)\Sfrak_{k+1} - f(p_{T-k-1})\\
    &\leq (k+2)\Sfrak_{k+1} - \Sfrak_{k} + \rE\left[\frac{W_2^2(\tilde p_{T-k-1},p_{T-k-1})}{2\eta(k+1)}\right] + G^2\frac{\eta}{2}\\
    \iff \qquad \qquad \;\;\, \Sfrak_k &\leq \Sfrak_{k+1} + \rE\left[\frac{W_2^2(\tilde p_{T-k-1},p_{T-k-1})}{2\eta(k+1)(k+2)}\right] + G^2\frac{\eta}{2(k+2)}\\
    \implies \quad f(p_T) = \Sfrak_0 &\leq \Sfrak_T + \sum_{k=0}^{T-1} \rE\left[\frac{W_2^2(\tilde p_{T-k-1},p_{T-k-1})}{2\eta(k+1)^2}\right] + \frac{\eta G^2\ln(T)}{2}.
\end{align*}
Finally we bound $W_2^2(\tilde p_t,p_t) \leq \rE_{\tilde w_{t}\sim \tilde p_t}[\|\tilde w_t - \eta\nabla \ell(\tilde w_t,z_t) -  \tilde w_t\|^2] \leq \eta^2 G^2$.
Combining with our bound on $\Sfrak_T$ from Corollary~\ref{corollary:conv} we have
that for any $w$ it holds that
\begin{align*}
    &\rE[\rE_{w_{p_T}}\left[\ell(w_{p_T},\Dcal) + g_T(w_{p_T})\right]] - \ell(w,\Dcal) - g_T(w)  + \beta_0 H(p_T\|q)\\
    \leq &\frac{\tilde\lambda_T\|w\|_2^2}{2T} + \frac{\tilde\lambda_T\beta_0}{T} \ln \left|\frac{T}{\tilde \lambda_T} \tilde{H}_\cD + I \right| + \frac{3}{2}\eta G^2\ln(T).
\end{align*}
Using the setting $\lambda_t = \frac{1}{t\eta}$ we can bound $\tilde \lambda_T \leq \frac{\ln(T)}{\eta}$. Together with the fact $\beta_0H(p_T||q) \geq 0$ we have
\begin{align*}
    \rE\left[\ell(p_T,\Dcal)\right] - \ell(w,\Dcal) \leq \frac{\ln(T)\|w\|_2^2}{T\eta} + \beta_0\trace(\tilde H_{\Dcal}) + \frac{3\ln(T)}{2}\eta G^2.
\end{align*}
\end{proof}

\subsection{Generalization}
\begin{proof}[Proof of Proposition~\ref{prop:generalization}]
We need the following lemma for the stability of SGLD. 
\begin{lemma}
  Consider $S_n=\{z_1,\ldots,z_n\}$, and $S_{n}'=\{z_1,\ldots,z_{n-1},z_{n}'\}$. 
  Let $p_t$ and $p_t'$ be the distributions of model parameter after $t$ iterations of Algorithm~\ref{alg:sgld} with $|\cM_{s-1}|=1$ for all $s$. 
  Assume that $\ell(w,z)$ is $L$-smooth for all $z$ for some $L >0$, and $\eta_1 \leq 1/L$.
  Then for all $t$:
  \[
  \textstyle
  W_2(p_t,p_t')^2 \leq 4G^2(t/n^2+1/n)
  \sum_{s \leq t} \eta_s^2 .
  \]
  \label{lem:stability}
  \end{lemma}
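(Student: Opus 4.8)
The plan is to run the expansiveness/stability argument of \citep{hardt2016train}, adapted to the extra Gaussian shrinkage step \eqref{eq:Langevin_sample}, but to carry it through at the level of the \emph{second} moment so that it directly controls $W_2^2$. First I would fix an explicit coupling of the two runs of Algorithm~\ref{alg:sgld} on $S_n$ and $S_n'$: at every iteration both chains draw the \emph{same} sampled index (since $|\cM_{s-1}|=1$, a single point) and use the \emph{same} standard Gaussian vector $\xi_t$ to realize the step \eqref{eq:Langevin_sample}. Writing $\delta_t=\|w_t-w_t'\|$ for the coupled iterates, it suffices to bound $\rE[\delta_t^2]$, because $W_2(p_t,p_t')^2\le \rE[\delta_t^2]$ for any coupling. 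Under the coupled initialization $w_1=w_1'\sim N(0,\beta_0 I)$, so $\delta_1=0$.

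Next I would establish a pathwise one-step recursion. The gradient step \eqref{eq:sgd} is the map $w\mapsto w-\eta_t\nabla\ell(w,z)$, which is $1$-Lipschitz whenever $\ell(\cdot,z)$ is convex and $L$-smooth and $\eta_t\le\eta_1\le 1/L$ (this is the standard co-coercivity computation: $\|w-w'\|^2-2\eta_t\langle\nabla\ell(w,z)-\nabla\ell(w',z),w-w'\rangle+\eta_t^2\|\nabla\ell(w,z)-\nabla\ell(w',z)\|^2\le\|w-w'\|^2$ when $\eta_t\le 2/L$). The shrink-and-add-noise step gives $w_t-w_t'=(1-\lambda_t\eta_t)(\tw_{t-1}-\tw_{t-1}')$ because the shared noise cancels, and since $|1-\lambda_t\eta_t|\le 1$ this step is also nonexpansive. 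The two runs' gradients differ only when the index drawn at step $t$ is the differing coordinate $n$, an event $A_t$ of probability $1/n$ that is independent of the history; on $A_t$ the discrepancy of the two gradients is at most $2G$ by the uniform gradient bound. Combining these facts yields, pathwise, $\delta_t\le\delta_{t-1}+2\eta_t G\,\mathbb{1}[A_t]$.

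I would then bootstrap through two moments. Taking expectations and using $\Pr[A_t]=1/n$ gives the first-moment bound $\rE[\delta_t]\le\rE[\delta_{t-1}]+\tfrac{2\eta_t G}{n}$, hence $\rE[\delta_t]\le\tfrac{2G}{n}\sum_{s\le t}\eta_s$. Squaring the pathwise recursion, expanding, and using that $A_t$ is independent of $\delta_{t-1}$ (so that $\rE[\delta_{t-1}\mathbb{1}[A_t]]=\tfrac1n\rE[\delta_{t-1}]$) gives
\[
\rE[\delta_t^2]\le \rE[\delta_{t-1}^2]+\frac{4\eta_t G}{n}\,\rE[\delta_{t-1}]+\frac{4\eta_t^2 G^2}{n}.
\]
Substituting the first-moment bound and summing from $s=1$ to $t$ (with $\delta_1=0$) produces two contributions: the cross term $\tfrac{8G^2}{n^2}\sum_{s\le t}\eta_s\sum_{r<s}\eta_r\le\tfrac{4G^2}{n^2}\big(\sum_{s\le t}\eta_s\big)^2\le\tfrac{4G^2 t}{n^2}\sum_{s\le t}\eta_s^2$ by Cauchy--Schwarz, and the variance term $\tfrac{4G^2}{n}\sum_{s\le t}\eta_s^2$. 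Adding them gives exactly $\rE[\delta_t^2]\le 4G^2\big(t/n^2+1/n\big)\sum_{s\le t}\eta_s^2$, and $W_2(p_t,p_t')^2\le\rE[\delta_t^2]$ finishes the proof.

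The main obstacle is the second-moment bootstrap rather than any single inequality: the first-moment recursion alone only reproduces the $t/n^2$ term after squaring, and the genuine $1/n$ term arises only from tracking $\rE[\delta_t^2]$ directly, where the cross term must be factorized. That factorization is where care is needed, since it relies on $A_t$ being independent of the entire past (guaranteed by with-replacement single-sample selection); the remaining work—nonexpansiveness of the SGD map, cancellation of the coupled noise, and the Cauchy--Schwarz step converting $(\sum_s\eta_s)^2$ into $t\sum_s\eta_s^2$—is routine.
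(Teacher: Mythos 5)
Your proof is correct and arrives at exactly the claimed constant, using the same high-level strategy as the paper (couple the two runs with shared sampled indices and shared Gaussian noise, then exploit nonexpansiveness of the gradient map under convexity, $L$-smoothness, and small step size, \`a la \citep{hardt2016train}), but the technical execution of the key step differs. The paper tracks the single quantity $\rE\|w_s-w_s'\|_2^2$ through one recursion and controls the cross term $\tfrac{4\eta_s G}{n}\rE\|w_{s-1}-w_{s-1}'\|_2$ by a weighted AM--GM inequality with weights $\tfrac{\eta_{s-1}^2-\eta_s^2}{\eta_s^2}$; the resulting multiplicative blow-up $\eta_{s-1}^2/\eta_s^2$ is then exactly cancelled by the contraction $(1-\lambda_s\eta_s)^2=\eta_s^2/\eta_{s-1}^2$ of the shrinkage step \eqref{eq:Langevin_sample}, and the leftover factor $\tfrac{\eta_s^2}{\eta_{s-1}^2-\eta_s^2}\le s$ produces the $t/n^2$ term. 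That argument is therefore intrinsically tied to the multi-pass parameterization $\lambda_t=\tfrac{1}{\eta_t}-\tfrac{1}{\eta_{t-1}}$ and to the step-size schedule. Your route is a two-moment bootstrap instead: from the pathwise recursion $\delta_t\le\delta_{t-1}+2\eta_t G\,\mathbf{1}[A_t]$ you first bound $\rE[\delta_t]$, then square, use independence of $A_t$ from the past (which you correctly identify as the crux; it holds here because the multi-pass setting samples a single index per step with replacement, and the paper's own proof relies on the same fact when it asserts the differing index is hit with probability $1/n$), and finish with Cauchy--Schwarz to convert $\bigl(\sum_{s\le t}\eta_s\bigr)^2$ into $t\sum_{s\le t}\eta_s^2$. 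What your version buys is simplicity and generality: it needs only $|1-\lambda_t\eta_t|\le 1$ (automatic, since nonnegativity of the noise variance forces $\lambda_t\eta_t\in[0,2]$) and nonincreasing step sizes, so it proves the lemma as stated for any admissible $\lambda_t$, whereas the paper's proof implicitly assumes the specific $\lambda_t$ of Theorem~\ref{thm:multi_pass}. What the paper's version buys in exchange is a single recursion in one quantity, and it exhibits how the Langevin shrinkage itself absorbs the cross term --- a structural feature your argument neither needs nor reveals.
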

  \begin{proof}[Proof of Lemma~\ref{lem:stability}]
  Consider $w_s$ ($\widetilde{w}_s$), and $w_s'$ ($\widetilde{w}_s'$) obtained from the algorithm with $S_n$ and $S_n'$, under the same randomization (except when we draw $z_n$ from $S_n$, we will draw $z_n'$ from $S_n'$). This gives a specialized couple of $p_s$ and $p_s'$. We can track $\rE \|w_s-w_s'\|_2^2$, which gives an upper bound of $W_2(p_s,p_s')^2$. 
  
  Consider any stage $t=s$.
  If we draw $\cM_{s-1}$ with $|cM_{s-1}|=1$ that does not contain $n$. Then we have the updates
  \begin{align*}
  \textstyle
  \widetilde{w}_{s-1}=&w_{s-1}- \eta_t \nabla\ell(w_{s-1},z_i)\\
  \widetilde{w}_{s-1}'=&w_{s-1}'- \eta_t \nabla\ell(w_{s-1}',z_i) .
  \end{align*}
  Since $\eta_t \leq 1/L$, we have
  \[
  \textstyle
  \|\widetilde{w}_{s-1}-\widetilde{w}_{s-1}'\|_2
  =\|[w_{s-1}-w_{s-1}']-\eta_t [\nabla\ell(w_{s-1},z_i)-\nabla\ell(w_{s-1}',z_i)]\|_2
  \leq \|w_{s-1}-w_{s-1}'\|_2 .
  \]
  If we draw $\cM_{s-1}$ (with $|\cM_{s-1}|=1$) that contains $n$ (and thus $z_n$ and $z_n'$ for the two datasets) instead, then we have
  the update
  \begin{align*}
  \|\widetilde{w}_{s-1}-\widetilde{w}_{s-1}'\|_2
  =&\left\|[w_{s-1}-w_{s-1}']- \eta_s [\nabla\ell(w_{s-1},z_n)-\nabla\ell(w_{s-1}',z_n')]\right\|_2\\
  =& \left\|w_{s-1}-w_{s-1}'- \eta_s \delta_s\right\|_2 ,
  \end{align*}
  where $\|\delta_s\|_2 =\|\nabla\ell(w_{s-1},z_n)-\nabla\ell(w_{s-1}',z_n')\|_2\leq 2 G$. 
  Since the probability that  $\cM_{s-1}$ that contains $n$ is $1/n$,
    we thus obtain
  \begin{align*}
  \textstyle
  &\rE \|\widetilde{w}_{s-1}-\widetilde{w}'_{s-1}\|_2^2\\
  \leq& \rE\|w_{s-1}-{w}_{s-1}'\|_2^2
  + \frac{4 \eta_s G}{n} \rE \|w_{s-1}-{w}_{s-1}'\|
  + \frac{4 \eta_s^2 G^2}{n} \\
  \leq& \frac{\eta_{s-1}^2}{\eta_s^2}
  \rE\|w_{s-1}-{w}_{s-1}'\|_2^2
  + \frac{\eta_{s}^2}{\eta_{s-1}^2-\eta_s^2}
  \frac{4 \eta_s^2 G^2}{n^2} 
  +\frac{4 \eta_s^2 G^2}{n} .
  \end{align*}
  The last inequality used $2ab \leq \frac{\eta_{s-1}^2-\eta_s^2}{\eta_{s}^2} a^2 + \frac{\eta_{s}^2}{\eta_{s-1}^2-\eta_s^2} b^2$.
  Since with the same randomization, 
  \[
  \rE\|w_{s}-{w}_{s}'\|_2^2
  = (1-\lambda_s\eta_s)^2 \rE \|\widetilde{w}_{s-1}-\widetilde{w}'_{s-1}\|_2^2=\frac{\eta_s^2}{\eta_{s-1}^2}
  \rE \|\widetilde{w}_{s-1}-\widetilde{w}'_{s-1}\|_2^2,
  \]
  and $\frac{\eta_{s}^2}{\eta_{s-1}^2-\eta_s^2}\leq s$, 
  we obtain
  \[
  \textstyle
  \rE\|w_{s}-{w}_{s}'\|_2^2
  \leq \rE\|w_{s-1}-{w}_{s-1}'\|_2^2
  + \frac{4t \eta_s^2 G^2}{n^2}
  +\frac{4 \eta_s^2 G^2}{n} .
  \]
  {We may take $w_1=w_1'$ because they both come from distribution $N(0,\beta_0 I)$. 
  By summing over $s=2$ to $s=t$, we obtain the desired result.}
  \end{proof}
We are now ready to prove Proposition~\ref{prop:generalization}. Consider two independent samples $\dataset_n=\{z_1, \ldots, z_n\}$, and $\dataset_n'=\{z_1', \ldots, z_n'\}$. Let
 $\dataset_n^{(i)}=\{z_1,\ldots, z_{i-1}, z_i', z_{i+1}, \ldots, z_n\}$. 
 Let $p_t^{(i)}$ be the distribution obtained from Algorithm~\ref{alg:sgld} using $\dataset_n^{(i)}$, $p_t$ be the distribution using $S_n$.
  We have
 \begin{align*}
 \textstyle
     \rE [\ell(p_t,\cD) -\ell(w,\cD)]
     = &\rE \frac{1}{n} \sum_{i=1}^n
     [\ell(p_t^{(i)}, z_i)
     - \ell(p_t, z_i)] 
     + \rE [\ell(p_t,\dataset_n)- \ell(w,\dataset_n) ] \\
      = &\rE \frac{1}{n} \sum_{i=1}^n
     \rE_{(w',w'') \sim q_*} [\ell(w', z_i)
     - \ell(w'', z_i)] 
     + \rE [\ell(p_t,\dataset_n)- \ell(w,\dataset_n) ]\\
     \leq& G
     \rE \frac{1}{n} \sum_{i=1}^n \rE_{(w',w'') \sim q_*}
     \|w'-w''\|_2
     + \rE [\ell(p_t,\dataset_n)- \ell(w,\dataset_n) ] \\
     \leq&
     \rE \frac{1}{n} \sum_{i=1}^n G
      W_2(p_t^{(i)},p_t)
     + \rE [\ell(p_t,\dataset_n)- \ell(w,\dataset_n) ] .
 \end{align*}
 In the above derivation, the first equality used the fact that $\ell(p_t,z)$ with $z \sim D$ has the same distribution as that of $\ell(p_t^{(i)},z_i)$.
 In the second equality, $q_*(w',w'')$ is the optimal $W_2$ coupling in $\Pi(p_t^{(i)},p_t)$ to achieve the optimality in $W_2$ distance. 
 The first inequality used the fact that $\ell(w,z)$ is  Lipschitz in $w$.
 The second inequality used the definition of $W_2$-distance and the Jensen's inequality. 
 Since $\dataset_n^{(i)}$ and $\dataset_n$ differ by one element, we obtain the desired result by applying Lemma~\ref{lem:stability}.
\end{proof}

\subsection{Differential Privacy}
We first state the following result.
\begin{lemma}
Consider $\dataset_m=\{z_{1},z_{2},\ldots,z_{m}\}$, and a randomized algorithm $w \sim \cA(\tilde{w},\dataset_m)$ with the one-step  update rule as follows
\[
\textstyle
w = \alpha \tilde{w} - \eta \nabla \ell(\tilde{w},z) + \sigma \epsilon, \qquad z \sim U(\dataset_m) , \quad \epsilon \sim N(0,I) ,
\]
where $U(\dataset_m)$ denotes the uniform sampling over elements of $\dataset_m$.
Then $\cA$ is
\[
\textstyle
\left(\ln\left(1+m^{-1}\exp\left(\sqrt{8\ln(1.25/\delta)} \eta G/\sigma\right)\right), \delta/m\right)
\]
deferentially private.
\label{lem:dp-random-onestep}
\end{lemma}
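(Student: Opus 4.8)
The plan is to view $\cA$ as the composition of uniform subsampling of a single data point with the Gaussian mechanism, and then combine the standard Gaussian-mechanism guarantee with a privacy-amplification-by-subsampling bound. First I would fix two neighboring datasets $\dataset_m$ and $\dataset_m'$ that differ only in their $j$-th entry, $z_j$ versus $z_j'$. Since $\tilde w$ is treated as a fixed input, the only data-dependent quantity in the update is the gradient term, so the map $z \mapsto \alpha\tilde w - \eta\nabla\ell(\tilde w,z)$ has $\ell_2$-sensitivity at most $\eta\|\nabla\ell(\tilde w,z_j)-\nabla\ell(\tilde w,z_j')\|_2 \le 2\eta G$, using the gradient bound $\sup_{w,z}\|\nabla_w\ell(w,z)\|_2\le G$ (Assumption~2). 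Conditioned on a fixed sampled index, the update is exactly the Gaussian mechanism with this sensitivity and noise covariance $\sigma^2 I$.

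Next I would invoke the classical Gaussian-mechanism guarantee: releasing $\alpha\tilde w - \eta\nabla\ell(\tilde w,z) + \sigma\epsilon$ for a fixed $z$ (under replacement of $z$) is $(\epsilon_0,\delta)$-DP with
\[
\epsilon_0 = \frac{2\eta G\sqrt{2\ln(1.25/\delta)}}{\sigma} = \frac{\sqrt{8\ln(1.25/\delta)}\,\eta G}{\sigma},
\]
which is precisely the exponent appearing in the statement. Then I would account for the uniform draw $z\sim U(\dataset_m)$ as subsampling with rate $q=1/m$. Writing the output laws of $\cA(\tilde w,\dataset_m)$ and $\cA(\tilde w,\dataset_m')$ as mixtures that share a common component of mass $(m-1)/m$ (from the $m-1$ indices on which the datasets agree) plus a $1/m$-mass component from the differing index, I would apply the subsampling-amplification theorem of \citep{Balle_dp_amplification}. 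Its advanced-joint-convexity bound for the hockey-stick divergence converts the $(\epsilon_0,\delta)$ per-draw guarantee into $(\epsilon',\delta/m)$-DP for $\cA$, with $e^{\epsilon'} = 1 + m^{-1}(e^{\epsilon_0}-1)$. Finally, relaxing $e^{\epsilon_0}-1\le e^{\epsilon_0}$ gives $\epsilon' = \ln(1+m^{-1}e^{\epsilon_0})$ and hence the claimed $(\epsilon',\delta/m)$ guarantee.

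The main obstacle I anticipate is the amplification step rather than the Gaussian-mechanism step. A naive mixture argument only transfers the failure probability (yielding $\delta/m$) but leaves $\epsilon_0$ essentially unchanged, so it does not by itself produce the $1/m$ factor multiplying $e^{\epsilon_0}$; the genuine $\epsilon$-amplification requires the advanced-joint-convexity identity, which exploits that the shared mixture component reappears in the second argument of the divergence. I would therefore take care that the precise constants in \citep{Balle_dp_amplification} are stated for the substitution (``differ in at most one item'') relation of Definition~\ref{def:dp}, and confirm that $\epsilon_0$ lies in the regime where the classical Gaussian-mechanism bound is valid (otherwise substituting the analytic Gaussian mechanism leaves the final form unchanged). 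The remaining manipulations — the sensitivity computation and the $e^{\epsilon_0}-1\le e^{\epsilon_0}$ relaxation — are routine.
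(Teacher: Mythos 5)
Your proposal is correct and follows essentially the same route as the paper's proof: the classical Gaussian-mechanism guarantee (Theorem A.1 of \citep{dwork2014algorithmic}) with sensitivity $2\eta G$ giving $\epsilon_0=\sqrt{8\ln(1.25/\delta)}\,\eta G/\sigma$, followed by privacy amplification by subsampling at rate $1/m$ via Theorem 9 of \citep{Balle_dp_amplification}. In fact you supply details the paper leaves implicit — the explicit amplification formula $e^{\epsilon'}=1+m^{-1}(e^{\epsilon_0}-1)$ and the relaxation $e^{\epsilon_0}-1\le e^{\epsilon_0}$ that yields the stated constant — so no gap remains.
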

\begin{proof}
Consider $z, z' \in \cZ$, and the update rule
\begin{equation}
\textstyle
w = \alpha \tilde{w} - \eta \nabla \ell(\tilde{w},z) + \sigma \epsilon,  \quad \epsilon \sim N(0,I) 
\label{eq:proof-dp-random-onstep-1}
\end{equation}
with probability distribution $p(w)$, and 
\[
\textstyle
w' = \alpha \tilde{w} - \eta \nabla \ell(\tilde{w},z') + \sigma \epsilon,  \quad \epsilon \sim N(0,I) 
\]
with probability distribution $p'(w')$. 

We know from Theorem A.1 of \citep{dwork2014algorithmic}
that
the Gaussian mechanism \eqref{eq:Langevin_sample}
applied to any dataset (without considering the randomization effect) is 
$(\epsilon(\delta), \delta)$
differentially private, where $\epsilon(\delta)= \sqrt{8\ln(1.25/\delta)} \eta G/\sigma$. 
We can now apply the privacy amplification result, Theorem 9 of
\citep{Balle_dp_amplification}, to deal sampling without replacement, and
obtain the desired differential privacy guarantee for $\cA$.
\end{proof}

We can now use the strong composition theorem to obtain the following result. We state the following slightly more general result of  Theorem 3.3 of \citep{dwork2010boosting}, which allows step dependent $(\epsilon,\delta)$. 
\begin{proposition}
Consider an online learning algorithm $\cA: \cZ^n \to \Omega^T$.
If at each step $t$, $\cA([w_1,\ldots,w_{t-1}],\dataset_n)$ is 
$(\epsilon_t, 0.5\delta/t(t-1))$ differentially private for $t \geq 2$, then 
$\cA$ is
\[
\textstyle
\left( \sqrt{2\ln(2/\delta)\sum_{s=1}^T \epsilon_s^2} + \sum_{s=1}^T \epsilon_s(e^{\epsilon_s}-1),\delta\right)
\]
differentially private. 
\label{prop:composition}
\end{proposition}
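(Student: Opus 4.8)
The plan is to track the \emph{privacy loss random variable} and apply a martingale concentration (Azuma--Hoeffding) argument, following Dwork--Rothblum--Vadhan. Fix two neighboring datasets $\dataset_n,\dataset_n'$ and let the output trajectory $(w_1,\ldots,w_T)$ be drawn according to $\cA(\dataset_n)$. Writing $p_t(\cdot\mid w_{1:t-1})$ and $p_t'(\cdot\mid w_{1:t-1})$ for the conditional laws of the $t$-th output under the two datasets, define the per-step loss $L_t = \ln\frac{p_t(w_t\mid w_{1:t-1})}{p_t'(w_t\mid w_{1:t-1})}$ and the aggregate loss $L = \sum_{t=1}^T L_t$. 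Since $w_1\sim N(0,\beta_0 I)$ is data-independent we have $L_1 = 0$, which is why only $t\ge 2$ enters the hypothesis. The goal reduces to showing that $L\le \epsilon'$ except with probability at most $\delta$ under $\cA(\dataset_n)$, where $\epsilon' = \sqrt{2\ln(2/\delta)\sum_s \epsilon_s^2} + \sum_s \epsilon_s(e^{\epsilon_s}-1)$; a standard argument then upgrades this tail statement to the $(\epsilon',\delta)$-DP guarantee of Definition~\ref{def:dp}.

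First I would perform a per-step reduction. By hypothesis, conditioned on any history $w_{1:t-1}$, the $t$-th release is $(\epsilon_t,\tfrac{\delta}{2t(t-1)})$-DP. Using the deletion/coupling characterization of approximate DP, for each $t$ there is a ``bad'' event $B_t$ (measurable with respect to $w_{1:t}$) of conditional probability at most $\tfrac{\delta}{2t(t-1)}$ off which the conditional mechanism behaves like a pure $\epsilon_t$-DP mechanism, so that $|L_t|\le \epsilon_t$ and, by the standard KL bound for pure DP, $\rE[L_t\mid w_{1:t-1}]\le \epsilon_t(e^{\epsilon_t}-1)$. Taking $B=\bigcup_{t\ge 2}B_t$ and applying the union bound, the choice $\delta_t=\tfrac{\delta}{2t(t-1)}$ makes the total failure probability telescope, since $\sum_{t\ge 2}\tfrac{1}{t(t-1)}=\sum_{t\ge 2}\big(\tfrac{1}{t-1}-\tfrac1t\big)=1$, so $P(B)\le \tfrac\delta2$. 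This is exactly what motivates the unusual $\delta_t$ schedule.

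On the complement of $B$, the centered sequence $X_t = L_t - \rE[L_t\mid w_{1:t-1}]$ is a martingale-difference sequence taking values in an interval of predictable length $2\epsilon_t$. I would then invoke the range form of the Azuma--Hoeffding inequality, which gives $P\big(\sum_t X_t\ge \tau\big)\le \exp\!\big(-\tau^2/(2\sum_s \epsilon_s^2)\big)$; the factor ``$2$'' in the exponent (rather than the naive factor coming from using the full range twice) is precisely what yields the stated leading constant. Setting the right-hand side equal to $\delta/2$ gives $\tau = \sqrt{2\ln(2/\delta)\sum_s \epsilon_s^2}$, and combining with the drift bound $\sum_t \rE[L_t\mid w_{1:t-1}]\le \sum_s \epsilon_s(e^{\epsilon_s}-1)$ shows that $L\le \epsilon'$ holds except on $B$ together with the Azuma tail event, a set of probability at most $\tfrac\delta2+\tfrac\delta2=\delta$. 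Converting the ``bounded privacy loss except with probability $\delta$'' conclusion into the inequality $P(\cA(\dataset_n)\in E)\le e^{\epsilon'}P(\cA(\dataset_n')\in E)+\delta$ for every $E\subseteq \Omega^T$ finishes the proof.

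The main obstacle is the per-step reduction in the \emph{adaptive} setting: I must decompose each conditional mechanism into a pure-DP part and a low-probability exceptional event \emph{uniformly over the history} $w_{1:t-1}$, so that the exceptional events combine under a single union bound and the centered losses genuinely form a martingale-difference sequence with respect to the natural filtration. Making the conditional mean bound $\rE[L_t\mid w_{1:t-1}]\le \epsilon_t(e^{\epsilon_t}-1)$ rigorous on the event-conditioned (rather than exactly $\epsilon_t$-DP) mechanism, and verifying that the resulting bounded-difference structure legitimately feeds into Azuma--Hoeffding with the sharp constant, is where the care lies; the remaining telescoping and tail computations are routine.
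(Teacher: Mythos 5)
Your proposal is correct and follows essentially the same route as the paper: the paper's proof consists only of the telescoping observation $\frac{\delta}{2}+\sum_{t\geq 2}\frac{0.5\delta}{t(t-1)}=\delta$ together with a deferral to the strong composition argument of \citet{dwork2010boosting}, and your write-up is precisely that argument (privacy-loss random variable, per-step reduction to pure DP with bad events of mass $\delta_t$, the KL drift bound $\epsilon_t(e^{\epsilon_t}-1)$, and Azuma--Hoeffding with the correct constant) adapted to step-dependent $(\epsilon_t,\delta_t)$. You also correctly identify both the telescoping role of the $\delta_t$ schedule and the adaptive-composition subtlety that the paper leaves implicit in its citation.
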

\begin{proof}
  We simply note that:
 $\frac{\delta}{2} + \sum_{t \geq 2} \frac{0.5\delta}{t(t-1)} = \delta$.
The proof is essentially the same as that of \citep{dwork2010boosting}.
\end{proof}
We then have the following proof of Proposition~\ref{proposition:dp_sgd}. 
\begin{proof}[Proof of Proposition~\ref{proposition:dp_sgd}]
We note that Algorithm~\ref{alg:sgld} can be regarded as the composition of $T$ steps of mechanisms in Lemma~\ref{lem:dp-random-onestep}, where at each step $t$, 
we can take $\eta=\eta_t^2/\eta_{t-1}$, and $\sigma^2=(1-(\eta_t/\eta_{t-1})^2)\beta_0$ to obtain
$\left(\epsilon_t,  0.5\delta/t(t-1)\right)$
differential privacy 
for each step $t \geq 2$ of Algorithm~\ref{alg:sgld},
where
\[
\textstyle
\epsilon_t=\ln\left(1+ n^{-1}\exp\bigg(\sqrt{\frac{8\ln(2.5 t(t-1)/(m\delta))}{(\eta_t^{-2}-\eta_{t-1}^{-2})\beta_0}} \; G\bigg)\right) \leq \ln (1+\epsilon_0) ,
\]
where $\epsilon_0=e/n$. In the derivation, we used $t \ln t^2 - (t-1)\ln (t-1)^2 \geq \ln t^2$.
By applying Proposition~\ref{prop:composition}, we obtain 
\[
\textstyle
\left(\sqrt{2T \ln (2/\delta)} \ln (1+\epsilon_0) + T(\ln(1+\epsilon_0)(e^{\ln (1+\epsilon_0)}-1),\delta\right)
\]
differential privacy. Since
$\ln (1+\epsilon) \leq \epsilon$,  
we obtain the desired bound.
\end{proof}

\subsection{R\'enyi Differential Privacy}
We are going to use Theorem 3.1 from \citep{feldman2020private}. Before we state the theorem, however, we have to recall the update for the projected noisy stochastic gradient descent with mini-batches algorithm from \citep{feldman2020private} given below:
\begin{align*}
\textstyle
    w_t = w_{t-1} - \eta_t\left(\Gcal(w_{t-1}) + \xi_{t-1}\right),
\end{align*}
where $\xi_{t-1}\sim \Ncal(0,\sigma_{t-1}^2I)$.  
\begin{theorem}[Theorem 3.1\citep{feldman2020private}]
\label{thm:fedlman_dp}
  Let $\Kcal\subseteq \RR^d$ be a convex set and $\{f(\cdot,z)\}_{z\in \Zcal}$ be convex, $G$-Lipstchitz and $L$-smooth functions over $\Kcal$. Then for every mini-batch sequence $\{\Mcal_t\}_{t\in[T]}$, step-size sequence $\{\eta_t\}_{t\in[T]}$ with $\eta_t \leq \frac{2}{L},\forall t\in[T]$, noise parameter $\{\sigma_t\}_{t\in T}$, $\alpha\geq 1$, starting point $w_0$ and $S_n \in \Zcal^n$ it holds that projected noisy stochastic gradient descent with mini-batches (PNSGD) with the above parameters satisfies
  \begin{align*}
  \textstyle
      D_{\alpha}(w_T\|w'_T) \leq \max_{\tau\in[T]}\frac{2\alpha G^2 \eta_{\tau}^2}{|\Mcal_{\tau}|^2\sum_{t=\tau}^T \eta_t^2\sigma_t^2}.
  \end{align*}
\end{theorem}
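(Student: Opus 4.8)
The plan is to prove the bound by the \emph{privacy amplification by iteration} technique, tracking a shifted R\'enyi divergence along the PNSGD trajectory. For a shift $s \geq 0$ define $D_\alpha^{(s)}(\mu\|\nu) = \inf\{D_\alpha(\mu'\|\nu) : W_\infty(\mu,\mu') \leq s\}$, where $W_\infty$ is the $\infty$-Wasserstein distance; note $D_\alpha^{(0)} = D_\alpha$. I will rely on two facts. First, a \emph{contraction} fact: if $\phi$ is $1$-Lipschitz then $D_\alpha^{(s)}(\phi_*\mu\|\phi_*\nu) \leq D_\alpha^{(s)}(\mu\|\nu)$, since pushing forward by a non-expansive map can increase neither the divergence nor the admissible shift. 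Second, a \emph{shift-reduction} fact for Gaussian convolution: for $Z\sim\Ncal(0,\rho^2 I)$ and any $a\geq 0$, $D_\alpha^{(s)}(\mu * Z \| \nu * Z) \leq D_\alpha^{(s+a)}(\mu\|\nu) + \frac{\alpha a^2}{2\rho^2}$, which trades a larger allowed shift before the noise for an additive penalty $\frac{\alpha a^2}{2\rho^2}$ after it.

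The first step is to verify that each noiseless update map is non-expansive. Since each $f(\cdot,z)$ is convex and $L$-smooth and $\eta_t \leq 2/L$, the gradient step $w \mapsto w - \eta_t\nabla f(w,z)$ is $1$-Lipschitz; averaging over the mini-batch preserves this, and composing with the Euclidean projection $\Pi_\Kcal$ (non-expansive because $\Kcal$ is convex) keeps the full update map $\phi_t$ non-expansive. The second step is to localize where the two runs on neighboring $S_n, S_n'$ can diverge. Coupling the two trajectories with identical noise, they agree until the first step $\tau$ whose mini-batch $\Mcal_\tau$ contains the differing index; there the two mini-batch gradients differ by at most $2G/|\Mcal_\tau|$ in norm, so the pre-noise points differ deterministically by at most $s_\tau := 2\eta_\tau G/|\Mcal_\tau|$. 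Hence the laws $\hat\mu_\tau,\hat\nu_\tau$ of the two pre-noise iterates at step $\tau$ satisfy $D_\alpha^{(s_\tau)}(\hat\mu_\tau\|\hat\nu_\tau) = 0$.

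The third step is to absorb this shift using the noise injected at steps $\tau,\tau+1,\dots,T$, whose effective per-coordinate variances are $\eta_t^2\sigma_t^2$. Each such step is a Gaussian convolution followed by non-expansive maps (the projection and the next gradient step) before the next noise addition; applying the shift-reduction fact with reduction $a_t$ at step $t$ and the contraction fact for the intervening maps yields
\begin{align*}
D_\alpha(\mu_T\|\nu_T) = D_\alpha^{(0)}(\mu_T\|\nu_T) \leq \sum_{t=\tau}^T \frac{\alpha a_t^2}{2\eta_t^2\sigma_t^2},
\end{align*}
valid for any nonnegative $a_\tau,\dots,a_T$ with $\sum_{t=\tau}^T a_t = s_\tau$, since reducing the total shift by exactly $s_\tau$ returns the admissible shift to $0$. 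The final step is to choose the allocation optimally: by Cauchy--Schwarz, $\sum_t a_t^2/(\eta_t^2\sigma_t^2)$ subject to $\sum_t a_t = s_\tau$ is minimized by $a_t \propto \eta_t^2\sigma_t^2$, giving value $s_\tau^2/\sum_{t=\tau}^T \eta_t^2\sigma_t^2$. Substituting $s_\tau = 2\eta_\tau G/|\Mcal_\tau|$ produces $\frac{2\alpha G^2\eta_\tau^2}{|\Mcal_\tau|^2\sum_{t=\tau}^T\eta_t^2\sigma_t^2}$, and taking the worst case over the unknown location $\tau$ of the differing point gives the stated maximum.

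I expect the main obstacle to be the shift-reduction fact and its clean iteration: one must verify the Gaussian convolution inequality (this is where the $\alpha a^2/(2\rho^2)$ term, and hence the factor $\alpha$ in the bound, originates) and check that the non-expansive maps interleaved between noise additions leave the accumulated shift untouched, so that the reductions $a_t$ compose additively and the budget closes at $0$ by step $T$. The remaining ingredients---non-expansiveness under the condition $\eta_t \leq 2/L$, the $2G/|\Mcal_\tau|$ mini-batch sensitivity, and the Cauchy--Schwarz optimization---are routine.
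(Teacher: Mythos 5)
The paper never proves this statement: it is imported verbatim as Theorem~3.1 of \citep{feldman2020private} and used as a black box in the proof of Proposition~\ref{prop:rdp}, so the only fair comparison is with the original source. Your proposal reconstructs exactly that source's argument, namely the privacy-amplification-by-iteration machinery of Feldman, Mironov, Talwar and Thakurta: the shifted R\'enyi divergence $D_\alpha^{(s)}$, contraction of the shifted divergence under a common non-expansive post-processing map, the Gaussian shift-reduction lemma with penalty $\alpha a^2/(2\rho^2)$, and the Cauchy--Schwarz-optimal budget allocation $a_t \propto \eta_t^2\sigma_t^2$, which indeed yields $s_\tau^2/\sum_{t=\tau}^T \eta_t^2\sigma_t^2$ and, with $s_\tau = 2\eta_\tau G/|\Mcal_\tau|$, exactly the claimed bound. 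The supporting steps all check out: non-expansiveness of $w \mapsto w - \eta\nabla f(w,z)$ for convex $L$-smooth $f$ with $\eta \leq 2/L$ follows from cocoercivity, projection onto the convex set $\Kcal$ and averaging over a mini-batch preserve non-expansiveness, and in the parameterization $w_t = w_{t-1} - \eta_t(\Gcal(w_{t-1}) + \xi_{t-1})$ the noise entering the iterate does have per-coordinate variance $\eta_t^2\sigma_t^2$, so your choice $\rho^2 = \eta_t^2\sigma_t^2$ is the right one.

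One hypothesis you use silently should be made explicit: your second step assumes the differing index appears in \emph{exactly one} mini-batch $\Mcal_\tau$, so that all update maps after step $\tau$ are identical across the two runs and the contraction fact applies to them. If an index could recur, fresh shifts of size $2\eta_t G/|\Mcal_t|$ would be injected at each recurrence, the shift budget would not close at $0$, and the bound with a maximum over a single $\tau$ would fail (one would instead accumulate a sum of such contributions). The statement says ``for every mini-batch sequence,'' but in \citep{feldman2020private} the mini-batches partition the data in a single pass, which is also the only regime in which the present paper invokes the theorem (Proposition~\ref{prop:rdp}, single-pass sampling without replacement per Assumption~\ref{assm:alg_minibatch}). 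With that disjointness assumption stated, your proof is complete and faithful to the original.
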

\begin{proof}[Proof of Proposition~\ref{prop:rdp}]
While Algorithm~\ref{alg:sgld} is not exactly an instance of PNSGD we argue that we can apply Theorem~\ref{thm:fedlman_dp} with step size $\eta_t = \eta$ and $\sigma_t^2 = \frac{\eta\lambda_t\beta_0(2-\eta_t\lambda_t)}{\eta^2}$. To apply Theorem~\ref{thm:fedlman_dp} it is sufficient to argue that the update in Algorithm~\ref{alg:sgld} is contractive for any step-size sequence with $\eta_t \leq \frac{2}{L}$. We can write the update as
\begin{align*}
    w_{t+1} = (1-\lambda_t\eta_t)\left(w_t - \eta_t\Gcal(w_t)\right) + \xi_t,
\end{align*}
where $\xi_t \sim \Ncal(0, \lambda_t\eta_t(2-\lambda_t\eta_t)\beta_0I)$.
For two neighboring sequences $(w_t)_{t\in[T]}$ and $(w_t')_{t\in[T]}$ we have
\begin{align*}
    \|w_{t+1} - w_{t+1}'\|_2 \leq (1-\lambda_t\eta_t)\|w_t - \eta_t\Gcal(w_t) - w_t' - \eta_t\Gcal(w_t') \|_2 < \|w_t - w_t'\|_2,
\end{align*}
where in the last inequality we use that $(1-\lambda_t\eta_t) < 1$ together with the fact that for $\eta_t \leq \frac{2}{L}$ the gradient mapping, $w_t - \eta_t\Gcal(w_t)$, is contractive because $\Gcal(\cdot)$ is $L$-Lipschitz.
We can now apply Theorem~\ref{thm:fedlman_dp} to Algorithm~\ref{alg:sgld}. Further, because $2\geq (2-\lambda_t\eta_t)\geq 1$, we can ignore this factor and so we have
\begin{align*}
\textstyle
    D_{\alpha}(w_T\|w_T') \leq \max_{\tau \in [T]}\frac{2\alpha \eta G^2}{|\Mcal_\tau|^2\beta_0\sum_{t=\tau}^T\lambda_t}.
\end{align*}
For the choice of $|\Mcal_\tau| = \sqrt{\frac{T}{T-\tau+1}}$ and $\lambda_t = \frac{1}{t\eta}$ we have
\begin{align*}
\textstyle
    \max_{\tau \in [T]}\frac{2\alpha \eta G^2}{|\Mcal_\tau|^2\beta_0\sum_{t=\tau}^T\lambda_t}
    = \max_{\tau\in [T]} \frac{2\alpha G^2\eta^2 (T-\tau+1)}{\beta_0 T \sum_{t=\tau}^T\frac{1}{t}} \leq \frac{2\alpha G^2\eta^2}{\beta_0}.
\end{align*}

\end{proof}

\section{Conclusion}

We obtain dimension independent generalization bounds for DP-SGD using convergence analysis of SGLD of \citep{freund2021convergence}. We demonstrate that the derived bounds improve previous results for certain classes of overparameterized smooth convex learning problems. Our results imply that for certain practical applications with overparameterized machine learning models, it is possible to obtain strong differential-privacy guarantees without much degradation in the generalization performance.

\clearpage
\bibliographystyle{plainnat}
\bibliography{myrefs}

\begin{thebibliography}{15}
\providecommand{\natexlab}[1]{#1}
\providecommand{\url}[1]{\texttt{#1}}
\expandafter\ifx\csname urlstyle\endcsname\relax
  \providecommand{\doi}[1]{doi: #1}\else
  \providecommand{\doi}{doi: \begingroup \urlstyle{rm}\Url}\fi

\bibitem[Abadi et~al.(2016)Abadi, Chu, Goodfellow, McMahan, Mironov, Talwar,
  and Zhang]{abadi2016deep}
Martin Abadi, Andy Chu, Ian Goodfellow, H~Brendan McMahan, Ilya Mironov, Kunal
  Talwar, and Li~Zhang.
\newblock Deep learning with differential privacy.
\newblock In \emph{Proceedings of the 2016 ACM SIGSAC conference on computer
  and communications security}, pages 308--318, 2016.

\bibitem[Balle et~al.(2018)Balle, Barthe, and Gaboardi]{Balle_dp_amplification}
Borja Balle, Gilles Barthe, and Marco Gaboardi.
\newblock Privacy amplification by subsampling: Tight analyses via couplings
  and divergences.
\newblock In S.~Bengio, H.~Wallach, H.~Larochelle, K.~Grauman, N.~Cesa-Bianchi,
  and R.~Garnett, editors, \emph{Advances in Neural Information Processing
  Systems}, volume~31, 2018.

\bibitem[Bassily et~al.(2020)Bassily, Feldman, Guzm\'an, and
  Talwar]{Bassily2020}
R.~Bassily, V.~Feldman, C.~Guzm\'an, and K.~Talwar.
\newblock Stability of stochastic gradient descent on nonsmooth convex losses.
\newblock In \emph{Proceedings of 34th Conference on Neural Information
  Processing Systems (Neurips)}, 2020.

\bibitem[Bassily et~al.(2014)Bassily, Smith, and Thakurta]{bassily2014private}
Raef Bassily, Adam Smith, and Abhradeep Thakurta.
\newblock Private empirical risk minimization: Efficient algorithms and tight
  error bounds.
\newblock In \emph{2014 IEEE 55th Annual Symposium on Foundations of Computer
  Science}, pages 464--473. IEEE, 2014.

\bibitem[Bassily et~al.(2019)Bassily, Feldman, Talwar, and
  Thakurta]{bassily2019private}
Raef Bassily, Vitaly Feldman, Kunal Talwar, and Abhradeep Thakurta.
\newblock Private stochastic convex optimization with optimal rates.
\newblock \emph{arXiv preprint arXiv:1908.09970}, 2019.

\bibitem[Dwork et~al.(2010)Dwork, Rothblum, and Vadhan]{dwork2010boosting}
Cynthia Dwork, Guy~N Rothblum, and Salil Vadhan.
\newblock Boosting and differential privacy.
\newblock In \emph{2010 IEEE 51st Annual Symposium on Foundations of Computer
  Science}, pages 51--60. IEEE, 2010.

\bibitem[Dwork et~al.(2014)Dwork, Roth, et~al.]{dwork2014algorithmic}
Cynthia Dwork, Aaron Roth, et~al.
\newblock The algorithmic foundations of differential privacy.
\newblock \emph{Found. Trends Theor. Comput. Sci.}, 9\penalty0 (3-4):\penalty0
  211--407, 2014.

\bibitem[Feldman et~al.(2020)Feldman, Koren, and Talwar]{feldman2020private}
Vitaly Feldman, Tomer Koren, and Kunal Talwar.
\newblock Private stochastic convex optimization: optimal rates in linear time.
\newblock In \emph{Proceedings of the 52nd Annual ACM SIGACT Symposium on
  Theory of Computing}, pages 439--449, 2020.

\bibitem[Freund et~al.(2021)Freund, Ma, and Zhang]{freund2021convergence}
Yoav Freund, Yi-An Ma, and Tong Zhang.
\newblock When is the convergence time of langevin algorithms dimension
  independent? a composite optimization viewpoint.
\newblock \emph{arXiv preprent arXiv:2110.01827}, 2021.

\bibitem[Hardt et~al.(2016)Hardt, Recht, and Singer]{hardt2016train}
Moritz Hardt, Ben Recht, and Yoram Singer.
\newblock Train faster, generalize better: Stability of stochastic gradient
  descent.
\newblock In \emph{International Conference on Machine Learning}, pages
  1225--1234. PMLR, 2016.

\bibitem[Jain et~al.(2019)Jain, Nagaraj, and Netrapalli]{jain2019making}
Prateek Jain, Dheeraj Nagaraj, and Praneeth Netrapalli.
\newblock Making the last iterate of {SGD} information theoretically optimal.
\newblock In \emph{Conference on Learning Theory}, pages 1752--1755. PMLR,
  2019.

\bibitem[Mironov(2017)]{mironov2017renyi}
Ilya Mironov.
\newblock R{\'e}nyi differential privacy.
\newblock In \emph{2017 IEEE 30th Computer Security Foundations Symposium
  (CSF)}, pages 263--275. IEEE, 2017.

\bibitem[Shamir and Zhang(2013)]{shamir2013stochastic}
Ohad Shamir and Tong Zhang.
\newblock Stochastic gradient descent for non-smooth optimization: Convergence
  results and optimal averaging schemes.
\newblock In \emph{International conference on machine learning}, pages 71--79.
  PMLR, 2013.

\bibitem[Song et~al.(2021)Song, Steinke, Thakkar, and
  Thakurta]{song2020evading}
Shuang Song, Thomas Steinke, Om~Thakkar, and Abhradeep Thakurta.
\newblock Evading curse of dimensionality in unconstrained private glms via
  private gradient descent.
\newblock In \emph{AISTAT 2021}, 2021.

\bibitem[Welling and Teh(2011)]{SGLD}
Max Welling and Yee~W Teh.
\newblock Bayesian learning via stochastic gradient {L}angevin dynamics.
\newblock In \emph{Proceedings of the 28th International Conference on Machine
  Learning}, pages 681--688. Omnipress, 2011.

\end{thebibliography}

\end{document}